\newtheorem{lem}{Lemma}
\newtheorem{theorem}{Theorem}
\newtheorem{prop}{Proposition}
\theoremstyle{remark}
\colorlet{Changes@Color}{red}
\begin{document}
\title{Weighted Sparse Partial Least Squares for Joint Sample and Feature Selection}
\author{Wenwen Min, Taosheng Xu and Chris Ding
\IEEEcompsocitemizethanks{
\IEEEcompsocthanksitem Wenwen Min is with the Yunnan Key Laboratory of Intelligent Systems and Computing, School of Information Science and Engineering, Yunnan University, Kunming 650091, Yunnan, China. E-mail: minwenwen@ynu.edu.cn.

\IEEEcompsocthanksitem Taosheng Xu is with the Institute of Intelligent Machines, Hefei Institutes of Physical Science, Chinese Academy of Sciences, Hefei, 230031, Anhui, China. E-mail: taosheng.x@gmail.com.

\IEEEcompsocthanksitem Chris Ding is with the School of Data Science, The Chinese University of Hong Kong, Shenzhen, China.
E-mail: chrisding@cuhk.edu.cn.
}
\thanks{Manuscript received XX, 2022; revised XX, 2023.}}

\markboth{IEEE TRANSACTIONS ON KNOWLEDGE AND DATA ENGINEERING, 2023}
{Min \MakeLowercase{\textit{et al.}}: Weighted Sparse Partial Least Squares}

\IEEEtitleabstractindextext{
\begin{abstract}
Sparse Partial Least Squares (sPLS) is a common dimensionality reduction technique for data fusion,
which projects data samples from two views by seeking linear combinations with a small number of variables with the maximum variance.
However, sPLS extracts the combinations between two data sets with all data samples so that it cannot detect latent subsets of samples.
To extend the application of sPLS by identifying a specific subset of samples and remove outliers,
we propose an $\ell_\infty/\ell_0$-norm constrained weighted sparse PLS ($\ell_\infty/\ell_0$-wsPLS) method for joint sample and feature selection,
where the $\ell_\infty/\ell_0$-norm constrains are used to select a subset of samples.
We prove that the $\ell_\infty/\ell_0$-norm constrains have the Kurdyka-\L{ojasiewicz}~property so that a globally convergent algorithm is developed to solve it.
Moreover, multi-view data with a same set of samples can be available in various real problems.
To this end, we extend the $\ell_\infty/\ell_0$-wsPLS model and propose two multi-view wsPLS models for multi-view data fusion.
We develop an efficient iterative algorithm for each multi-view wsPLS model and show its convergence property.
As well as numerical and biomedical data experiments demonstrate the efficiency of the proposed methods.
\end{abstract}
\begin{IEEEkeywords}
$\ell_\infty/\ell_0$-norm, sparse PLS, sparse CCA, multi-view learning, nonconvex optimization.
\end{IEEEkeywords}
}
\maketitle
\IEEEpeerreviewmaketitle

\section{Introduction}
\IEEEPARstart{A} number of paired biomedical omics data from the same batch of patients have been accumulated \cite{hutter2018cancer,welch2019single}.
These invaluable datasets provide us new opportunities to explore cooperative mechanisms between different types of biological molecules.
Partial Least Squares (PLS) is a popular technique for finding correlations across two data matrices with the maximum variance on a new low-dimensional space by learning two latent vectors for both representations \cite{krishnan2011partial,boulesteix2007partial,chen2019solving}.
Recently, PLS has been widely used for data fusion of the paired omics data \cite{rohart2017mixomics},
such as DNA copy number variation and gene expression data, DNA methylation and gene expression data, etc.

However, the classical PLS has the overfitting problem when dealing with high-dimensional but small sample biomedical data.
In addition, the outputs of PLS (i.e., latent vectors) are a combination of all variables, which is difficult to interpret.
To address these problems, sparse PLS (sPLS) methods have been proposed to avoid overfitting problem and improve interpretation ability of model by learning two sparse latent vectors \cite{le2008sparse,chun2010sparse,monteiro2016multiple,li2015discriminative,zhu2020envelope}.
Moreover, some extensions of the sPLS methods have been developed by using the structured sparse penalties, such as structured sparse PLS and matrix factorization methods \cite{liquet2016group,chen2016integrative,min2019group,min2021Novel,min2022structured}.
These extension methods can improve the performance by integrating the structural information of the variables when applying to the biomedical paired omics datasets.
For example, Chen et al. \cite{chen2016integrative} proposed a sparse network-regularized PLS method to identify common modules (co-modules) using the matched gene-expression and drug response data.
However, these sPLS and their variants often perform poorly when applying to high dimensional biomedical datasets with irrelevant features and noisy samples.
The main reason is that they are susceptible to outliers and noisy samples, causing the acquired projection vectors to deviate from the desired direction.
On one hand, most medical datasets are high-dimensional and polluted data due to the influence of experimental conditions and environmental factors.
On the other hand, even the collected patients in the medical datasets from the same disease show heterogeneity in the field of biomedicine, such as cancer disease \cite{dagogo2018tumour}.
So it is very necessary to mine sample-specific co-modules when applying to two view or multi-view data.

To this end, we develop an $\ell_\infty/\ell_0$-norm constrained weighted sparse PLS ($\ell_\infty/\ell_0$-wsPLS) model for joint sample and feature selection, which can simultaneously select features and samples on a paired datasets (Figure \ref{fig-1}).
The use of $\ell_\infty/\ell_0$-norm constraint is used to select a sample subset, while the $\ell_0$-norm constraint is used to select features in our model.
However, it is very difficult to find an effective convergence algorithm in traditional way, to solve the $\ell_\infty/\ell_0$-wsPLS model because the $\ell_\infty/\ell_0$-norm and the $\ell_0$-norm are non-convex and non-smooth.
Fortunately, we prove that the $\ell_\infty/\ell_0$-norm and the $\ell_0$-norm constraints satisfy the Kurdyka-\L{ojasiewicz} (K\L) property.
Inspired by the Proximal Alternating Linearized Minimization (PALM) method \cite{bo2014proximal},
we develop a block proximal gradient algorithm to solve the $\ell_\infty/\ell_0$-wsPLS model and prove that it converges to a critical for any initial point.

Furthermore, multi-view biomedical data have accumulated at an unprecedented rate in recent years \cite{chen2018matrix,meng2016dimension,rappoport2018multi,wang2020scalable}.
These collected multi-view biomedical data presents possibility to study the molecular mechanism of cancer pathogenesis.
Many multi-view learning methods have been proposed to consider the cancer multi-omics data to improve their generalization ability \cite{zhang2019binary,han2022multi,hou2017multi,hu2018adaptive,deng2020multi,zhao2017multi},
but it is still a challenge to mine co-modules between multi-view biomedical data.
To this end, we extend the wsPLS model for multi-view omics data analysis and propose two multi-view weighted sparse PLS (mwsPLS) models with two different schemes.
For each mwsPLS model, we develop an efficient iterative algorithm based on the PALM framework and prove its convergence.

In this paper,
we firstly show that the proposed methods significantly improve the performance of joint sample and feature selection when applying to simulated data.
Secondly, we demonstrate the ability of $\ell_\infty/\ell_0$-wsPLS in identifying sample-specific co-modules when applying to the paired DNA copy number variation and gene expression data, and the paired DNA methylation and gene expression data, respectively.
The results demonstrate that $\ell_\infty/\ell_0$-wsPLS outperforms others methods, which obtains higher correlation coefficients by removing noisy samples.
Thirdly, we also demonstrate the ability of mwsPLS in discovering cancer subtype-specific co-modules when applying the paired miRNA-lncRNA-mRNA expression data of breast cancer.
The results show that mwsPLS can identify some important miRNA-lncRNA-mRNA co-modules which are involved in some breast cancer related biological processes. The main contributions of this paper are highlighted as follows:
\begin{enumerate}
  \item We propose an $\ell_\infty/\ell_0$-norm constrained wsPLS model for joint sample and feature selection, where the $\ell_\infty/\ell_0$-norm constraints are used for sample selection.
  We prove that the $\ell_\infty/\ell_0$-norm constraints satisfy the K\L~property so that a globally convergent algorithm is developed to solve the $\ell_\infty/\ell_0$-wsPLS model.
  \item To integrate more than two datasets, we extend the $\ell_\infty/\ell_0$-wsPLS model and proposed two multi-view wsPLS (mwsPLS) models with two different schemes.
   We develop an efficient iterative algorithm based on the PALM framework and prove its convergence for each mwsPLS model.
  \item The application of $\ell_\infty/\ell_0$-wsPLS and mwsPLS methods and the comparison with the competing methods using the simulated and cancer multi-omic data.
\end{enumerate}

\begin{table}[htp]
\centering
\caption{Summary of notations.}
\begin{adjustbox}{width=1\columnwidth,center}
\begin{tabular}{l|l}
   \hline
   \textbf{Notation}  & \textbf{Meanings}\\
   \hline
   Normal font, e.g., x     & A scalar \\
   Bold lowercase, e.g., $\bm{u}$  & A vector\\
   Bold capital, e.g., $\bm{X}$    & A matrix\\
   \hline
   $\bm{X}_i$  & A $n$-by-$p_i$ matrix\\
   $\bm{X}$    & A $n$-by-$p$ matrix\\
   $\bm{Y}$    & A $n$-by-$q$ matrix\\
   \hline
   $\|\cdot\|$ or $\|\cdot\|_2$ & $\ell_2$-norm for a vector\\
   $\|\cdot\|_1$                & $\ell_1$-norm for a vector\\
   $\|\cdot\|_0$                & $\ell_0$-norm for a vector\\
   $\|\bm{w}\|_\infty$          & $\|\bm{w}\|_\infty =\max_j |w_j|$\\
   $\odot$                      & Element-wise product for two vectors\\
   \hline
\end{tabular}
\end{adjustbox}\label{tab-1}
\end{table}

\begin{figure*}[htbp]
  \centering \includegraphics[width=1\linewidth]{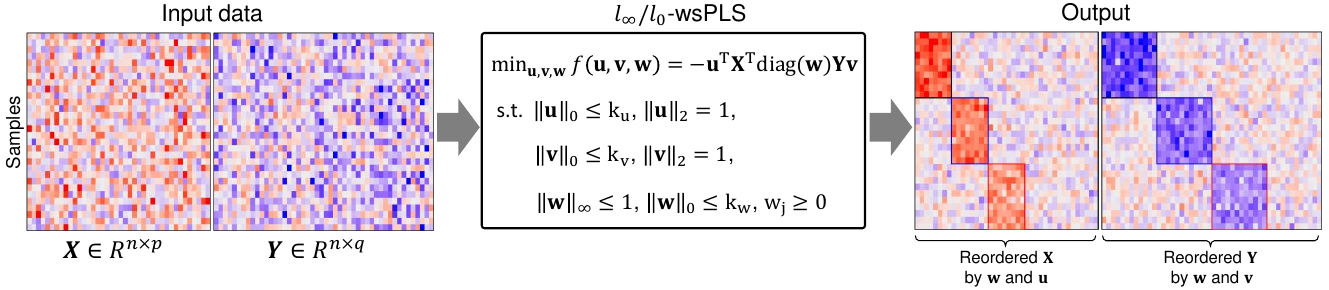}
  \caption{Overview of the proposed  $\ell_\infty/\ell_0$-norm constrained wsPLS model for joint sample and feature selection, where the $\ell_\infty/\ell_0$-norm constrains are used to select a subset of samples.
  The $\ell_\infty/\ell_0$-wsPLS model can not only obtain two sparse canonical vectors $\bm{u}$ and $\bm{v}$, but also identify a set of samples based on those non-zero elements of $\bm{w}$. In this toy example, $\ell_\infty/\ell_0$-wsPLS is used to extract three co-modules.
  }\label{fig-1}
\end{figure*}

\section{Weighted Sparse PLS (wsPLS)}
\subsection{wsPLS Model}
Assume that there are two data matrices $\bm{X}\in \mathbb{R}^{n\times p}$ and $\bm{Y} \in \mathbb{R}^{n\times q}$ with a same set of samples and their columns are standardized to have mean zero and variance unit. PLS is equivalent to solving the following optimization problem \cite{rosipal2005overview}:
\begin{equation}\label{equ-1}
\begin{aligned}
& \underset{\bm{u},\bm{v}}{\text{maximize}} && \bm{u}^T\bm{X}^T\bm{Y}\bm{v} \\
& \text{subject to}  && \|\bm{u}\| = \|\bm{v}\| = 1.
\end{aligned}
\end{equation}
The basic idea of PLS is to obtain two projected vectors $\bm{u}$ and $\bm{v}$ by maximizing the covariance between the latent vectors $\bm{Xu}$ and $\bm{Yv}$.
Specifically, a sparse PLS (sPLS) with $\ell_0$-norm constraint is formulated as:
\begin{equation}\label{equ-2}
\begin{aligned}
& \underset{\bm{u},\bm{v}}{\text{maximize}} && \bm{u}^T\bm{X}^T\bm{Y}\bm{v} \\
& \text{subject to}  && \|\bm{u}\|_0\leq k_u, \|\bm{v}\|_0\leq k_v, \|\bm{u}\| = \|\bm{v}\| = 1,
\end{aligned}
\end{equation}
where $k_u$ and $k_v$ are two parameters to control the sparsity of $\bm{u}$ and $\bm{v}$. The above model is also called sparse diagonal CCA problem \cite{asteris2016simple,wenwen2018sparse}.

We observe that the objective function in Eq. (\ref{equ-2}) can be written as $\sum_{i=1}^n (\bm{Xu})_i(\bm{Yv})_i$.
To consider the difference of samples, we set a weighted coefficient for each sample and modify the above objective function as $\sum_{i=1}^n w_i (\bm{Xu})_i(\bm{Yv})_i=\bm{u}^T\bm{X}^T \mbox{diag}(\bm{w}) \bm{Y}\bm{v}$ and propose a weighted sparse PLS model with the $\ell_\infty/\ell_0$-norm constraints, which is denoted as ($\ell_\infty/\ell_0$-wsPLS or wsPLS for short):
\begin{equation}\label{equ-3}
\begin{aligned}
& \underset{\bm{u},\bm{v},\bm{w}}{\text{maximize}} && \bm{u}^T\bm{X}^T \mbox{diag}(\bm{w}) \bm{Y}\bm{v} \\
& \text{subject to}  && \|\bm{u}\|_0\leq k_u, \|\bm{v}\|_0\leq k_v, \|\bm{u}\| = \|\bm{v}\| = 1,\\
&  && \|\bm{w}\|_\infty \leq 1, \|\bm{w}\|_0\leq k_w, w_j\geq 0~\forall j
\end{aligned}
\end{equation}
where $\|\bm{w}\|_\infty = \max_j |w_j| $ and $\mbox{diag}(\bm{w}) \in \mathbb{R}_+^{n\times n}$ is a diagonal matrix with the diagonal entries $w_1, w_2, \cdots, w_n$ indicating the weighted coefficients of samples and the $\ell_\infty/\ell_0$-norm constraints are used to select a subset of samples.
The mathematical notations are summarized in Table \ref{tab-1}.

\subsection{Optimization Algorithm}
Eq. (\ref{equ-3}) is equivalent to the following optimization problem because $\bm{u}^T\bm{X}^T\mbox{diag}(\bm{w})\bm{Y}\bm{v} = \bm{w}^T[(\bm{X}\bm{u})\odot(\bm{Y}\bm{v})]$:
\begin{equation}\label{equ-4}
\begin{aligned}
& \underset{\bm{u},\bm{v},\bm{w}}{\text{minimize}} && f(\bm{u},\bm{v},\bm{w}) = -\bm{w}^T[(\bm{X}\bm{u})\odot(\bm{Y}\bm{v})] \\
& \text{subject to}  && \|\bm{u}\|_0\leq k_u, \|\bm{v}\|_0\leq k_v, \|\bm{u}\| = \|\bm{v}\| = 1,\\
&  && \|\bm{w}\|_\infty \leq 1, \|\bm{w}\|_0\leq k_w, w_j\geq 0~\forall j
\end{aligned}
\end{equation}
Obviously, the above $\ell_\infty/\ell_0$-wsPLS model reduces to sPLS model when all elements of $\bm{w}$ are one.
We observe that $f(\bm{u},\bm{v},\bm{w})$ is also equal to $-\bm{u}^T \bm{X}^T[\bm{w}\odot(\bm{Y}\bm{v})]$ or $-\bm{v}^T \bm{Y}^T[\bm{w}\odot(\bm{X}\bm{u})]$. Therefore, we have
\begin{equation}\label{equ-5}
\begin{aligned}
&\nabla_{\bm{u}} f(\bm{u},\bm{v},\bm{w})&=& -\bm{X}^T[\bm{w}\odot(\bm{Y}\bm{v})],\\
&\nabla_{\bm{v}} f(\bm{u},\bm{v},\bm{w})&=& -\bm{Y}^T[\bm{w}\odot(\bm{X}\bm{u})],\\
&\nabla_{\bm{w}} f(\bm{u},\bm{v},\bm{w})&=& -(\bm{X}\bm{u})\odot(\bm{Y}\bm{v}).
\end{aligned}
\end{equation}
Furthermore, we know that the Hessian matrices of $f(\bm{u},\bm{v},\bm{w})$ with respect to $\bm{u}$, $\bm{v}$ and $\bm{w}$ are zero matrix, respectively. Thus, $\nabla_{\bm{u}} f(\bm{u},\bm{v},\bm{w})$, $\nabla_{\bm{v}} f(\bm{u},\bm{v},\bm{w})$ and $\nabla_{\bm{w}} f(\bm{u},\bm{v},\bm{w})$ are Lipschitz continuous, and their Lipschitz constants can be any constant greater than zero ($L_u>0$, $L_v>0$, and $L_w>0$).

Recently, the PALM algorithm \cite{bo2014proximal} has been proposed to solve a class of non-convex and non-smooth problems.
Based on the PALM framework, we develop a block proximal gradient algorithm to solve (\ref{equ-4}) by iteratively updating each variable when fixing others.

\textbf{1) Computation of $\bm{u}^{t+1}$ using $\bm{u}^{t}$, $\bm{v}^{t}$ and $\bm{w}^{t}$.}
We perform the proximal gradient step for $\bm{u}^{t+1}$ when $\bm{v}$ and $\bm{w}$ are fixed:
\begin{equation}\label{equ-6}
\begin{aligned}
& \underset{\bm{u}}{\text{minimize}} && \|\bm{u} - \overline{\bm{u}}\|_2^2  \\
& \text{subject to}  && \|\bm{u}\|_0\leq k_u, \|\bm{u}\| = 1,\\
\end{aligned}
\end{equation}
where $\overline{\bm{u}} = \bm{u}^{t} - \frac{1}{L_u} \nabla_{\bm{u}} f(\bm{u}^{t}, \bm{v}^{t}, \bm{w}^{t})$.
To solve the above problem (\ref{equ-6}), we define an $\ell_0$-ball projection problem as Eq. (\ref{equ-7}) and give its closed-form solution in Proposition \ref{prop-1}.
\begin{equation}\label{equ-7}
 \bm{x}^* = \underset{\bm{x}}{\text{argmin}} \{\|\bm{x} - \bm{z}\|_2^2: \|\bm{x}\|_0 < k, \|\bm{x}\| = 1\}
\end{equation}
\begin{prop}\label{prop-1}
Suppose that $\bm{z}$ is a no-zero vector, then the solution of problem (\ref{equ-7}) is $\bm{x}^* =  \frac{T_k(\bm{z})}{\|T_k(\bm{z})\|}$.
$T_k(\bm{z})$ keeps the top $k$ entries with largest absolute values of $\bm{z}$ and the other entries are zeros.
\end{prop}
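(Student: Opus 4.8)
The plan is to turn this constrained projection into a simple ``maximize correlation'' problem and then reduce it to a combinatorial selection. First I would expand the objective using the constraint $\|\bm{x}\| = 1$: since $\|\bm{x} - \bm{z}\|_2^2 = \|\bm{x}\|^2 - 2\bm{x}^T\bm{z} + \|\bm{z}\|^2 = 1 - 2\bm{x}^T\bm{z} + \|\bm{z}\|^2$, problem (\ref{equ-7}) is equivalent to maximizing $\bm{x}^T\bm{z}$ subject to $\|\bm{x}\|_0 \leq k$ and $\|\bm{x}\| = 1$ (I read the cardinality constraint consistently with the top-$k$ operator $T_k$). This rewriting is the only conceptual move; everything afterward is elementary.

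Next I would split the feasible set according to the support of $\bm{x}$. Fix an index set $S$ with $|S| \leq k$ and consider feasible $\bm{x}$ supported on $S$; writing $\bm{x}_S$, $\bm{z}_S$ for the corresponding subvectors, the Cauchy--Schwarz inequality gives $\bm{x}^T\bm{z} = \bm{x}_S^T\bm{z}_S \leq \|\bm{x}_S\|\,\|\bm{z}_S\| = \|\bm{z}_S\|$, with equality attained (when $\bm{z}_S \neq \bm{0}$) by $\bm{x}_S = \bm{z}_S/\|\bm{z}_S\|$ and the remaining entries zero. Hence the optimal value over all feasible $\bm{x}$ equals $\max_{|S| \leq k} \|\bm{z}_S\|$. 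Since $\|\bm{z}_S\|^2 = \sum_{j \in S} z_j^2$, this quantity is maximized by taking $S$ to consist of the $k$ indices with largest $|z_j|$, i.e.\ $S = \mathrm{supp}(T_k(\bm{z}))$, and the corresponding optimizer is $\bm{x}^* = T_k(\bm{z})/\|T_k(\bm{z})\|$. The hypothesis that $\bm{z}$ is nonzero ensures $T_k(\bm{z}) \neq \bm{0}$, so $\bm{x}^*$ is well defined, satisfies $\|\bm{x}^*\| = 1$, and has at most $k$ nonzeros by construction.

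I do not expect a genuine obstacle here; the proof is short once the objective is put in the maximize-correlation form. The only points needing care are the edge cases: if $\bm{z}$ has fewer than $k$ nonzero entries then $T_k(\bm{z})$ simply retains all of them and the argument is unchanged; and if several entries of $\bm{z}$ tie for the $k$-th largest magnitude, the optimal support---and therefore $\bm{x}^*$---is not unique, but any top-$k$ selection yields an optimizer, which is all the statement asserts. I would mention this non-uniqueness in a remark rather than complicate the main argument.
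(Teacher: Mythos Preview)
Your argument is correct and is the standard way to establish this projection result: expand the squared distance using $\|\bm{x}\|=1$, reduce to $\max \bm{x}^T\bm{z}$, then apply Cauchy--Schwarz over each fixed support and optimize the support combinatorially. The paper itself states Proposition~\ref{prop-1} without proof, so there is nothing to compare against; your proposal would serve perfectly well as the missing justification, and your remarks on the typo in the cardinality constraint and on non-uniqueness under ties are appropriate caveats.
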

Based on Proposition \ref{prop-1}, we obtain the closed-form solution of Eq. (\ref{equ-6}) as follows:
\begin{equation}\label{equ-8}
   \bm{u}^{t+1} := \frac{T_{k_u}(\overline{\bm{u}})}{\|T_{k_u}(\overline{\bm{u}})\|},
\end{equation}
where $\overline{\bm{u}} = \bm{u}^{t} - \frac{1}{L_u} \nabla_{\bm{u}} f(\bm{u}^{t}, \bm{v}^{t}, \bm{w}^{t})$, the partial derivatives $\nabla_{\bm{u}} f(\bm{u},\bm{v},\bm{w}) = -\bm{X}^T[\bm{w}\odot(\bm{Y}\bm{v})]$ and the Lipschitz constant $L_u$ can be set to any constant greater than zero.

\textbf{2) Computation of $\bm{v}^{t+1}$ using $\bm{u}^{t+1}$, $\bm{v}^{t}$ and $\bm{w}^{t}$.}
We perform the proximal gradient step for $\bm{v}^{t+1}$ when $\bm{u}$ and $\bm{w}$ are fixed:
\begin{equation}\label{equ-9}
\begin{aligned}
& \underset{\bm{v}}{\text{minimize}} && \|\bm{v} - \overline{\bm{v}}\|_2^2  \\
& \text{subject to}  && \|\bm{v}\|_0\leq k_v, \|\bm{v}\| = 1,\\
\end{aligned}
\end{equation}
where $\overline{\bm{v}} = \bm{v}^{t} - \frac{1}{L_v} \nabla_{\bm{u}} f(\bm{u}^{t+1}, \bm{v}^{t}, \bm{w}^{t})$.
Based on the Proposition \ref{prop-1}, we obtain the closed-form solution of Eq. (\ref{equ-9}) as follows:
\begin{equation}\label{equ-10}
   \bm{v}^{t+1} := \frac{T_{k_v}(\overline{\bm{v}})}{\|T_{k_v}(\overline{\bm{v}})\|},
\end{equation}
where $\overline{\bm{v}} = \bm{v}^{t} - \frac{1}{L_v} \nabla_{\bm{u}} f(\bm{u}^{t+1}, \bm{v}^{t}, \bm{w}^{t})$, the partial derivatives $\nabla_{\bm{v}} f(\bm{u},\bm{v},\bm{w}) = -\bm{Y}^T[\bm{w}\odot(\bm{X}\bm{u})]$ and the Lipschitz constant $L_v$ can be set to any constant greater than zero.

\textbf{3) Computation of $\bm{w}^{t+1}$ using $\bm{u}^{t+1}$, $\bm{v}^{t+1}$ and $\bm{w}^{t}$.}
We perform the proximal gradient step for $\bm{w}^{t+1}$ when $\bm{u}$ and $\bm{v}$ are fixed:
\begin{equation}\label{equ-11}
\begin{aligned}
& \underset{\bm{w}}{\text{minimize}} && \|\bm{w} - \overline{\bm{w}}\|_2^2  \\
&  && \|\bm{w}\|_0\leq k_w, \|\bm{w}\|_\infty \leq 1, w_j\geq 0~\forall j
\end{aligned}
\end{equation}
where $\overline{\bm{w}} = \bm{w}^{t} - \frac{1}{L_w} \nabla_{\bm{w}} f(\bm{u}^{t+1}, \bm{v}^{t+1}, \bm{w}^{t})$.
To solve the above model (\ref{equ-11}), we define a projection function $\widehat{\bm{w}}:= P_{\infty,+}(\bm{w})$ for an arbitrary vector $\bm{w}$, which satisfies as follows for all $j$:
\begin{equation}\label{equ-12}
\widehat{w}_j = \left\{
\begin{array}{cl}
  0,   &\text{if}~w_j \leq 0, \\
  1,   &\text{if}~w_j \geq 1, \\
  w_i, &\text{otherwise}.
\end{array}
\right.
\end{equation}
We obtain the closed-form solution of Eq. (\ref{equ-11}) using the following Proposition \ref{prop-2}.
\begin{prop}\label{prop-2}
The optimal solution of problem (\ref{equ-11}) is $\bm{w}^* = T_{k_w}(P_{\infty,+}(\overline{\bm{w}}))$.
\end{prop}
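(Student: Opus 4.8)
The plan is to reduce the mixed combinatorial--continuous problem (\ref{equ-11}) to a pure best-subset selection, in the same spirit as the argument behind Proposition~\ref{prop-1}. First I would fix an arbitrary index set $S\subseteq\{1,\dots,n\}$ with $|S|\le k_w$ and consider the restricted problem of minimizing $\|\bm{w}-\overline{\bm{w}}\|_2^2$ over all $\bm{w}$ supported on $S$ with $0\le w_j\le 1$. Since the objective and the box constraints are separable across coordinates, this restricted problem splits into $n$ one-dimensional problems: for $j\notin S$ we are forced to set $w_j=0$, contributing $\overline{w}_j^{\,2}$; for $j\in S$ the minimizer is the Euclidean projection of $\overline{w}_j$ onto $[0,1]$, which is exactly $\widehat{w}_j=(P_{\infty,+}(\overline{\bm{w}}))_j$ by the definition (\ref{equ-12}), contributing $(\widehat{w}_j-\overline{w}_j)^2$. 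Hence the optimal value of (\ref{equ-11}) equals $\min_{|S|\le k_w}\bigl[\sum_{j\notin S}\overline{w}_j^{\,2}+\sum_{j\in S}(\widehat{w}_j-\overline{w}_j)^2\bigr]$, and every minimizer of (\ref{equ-11}) has the form $w_j=\widehat{w}_j$ on some optimal support $S^{*}$ and $w_j=0$ elsewhere.

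Next I would turn the objective into a function of $S$ alone. Adding and subtracting $\sum_{j\in S}\overline{w}_j^{\,2}$ rewrites it as $\sum_{j=1}^{n}\overline{w}_j^{\,2}-\sum_{j\in S}d_j$ with $d_j:=\overline{w}_j^{\,2}-(\widehat{w}_j-\overline{w}_j)^2$. A short case check over the three branches of (\ref{equ-12}) shows $d_j\ge 0$ for every $j$ (namely $d_j=0$ when $\overline{w}_j\le 0$, $d_j=\overline{w}_j^{\,2}$ when $\overline{w}_j\in[0,1]$, and $d_j=2\overline{w}_j-1\ge 1$ when $\overline{w}_j\ge 1$), so minimizing the objective is equivalent to choosing $S$ with $|S|\le k_w$ to maximize $\sum_{j\in S}d_j$, which is achieved by taking $S^{*}$ to be a set of indices of the $k_w$ largest values $d_j$. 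The same case analysis shows that $d_j$ is a nondecreasing function of $\overline{w}_j$ and, more precisely, that $\widehat{w}_i<\widehat{w}_j$ implies $d_i<d_j$; equivalently $d_j\ge d_i\Rightarrow\widehat{w}_j\ge\widehat{w}_i$. Consequently a set of the $k_w$ largest $d_j$ is also a valid set of the $k_w$ largest $\widehat{w}_j=|\widehat{w}_j|$, i.e. exactly the support retained by $T_{k_w}(\cdot)$; substituting $w_j=\widehat{w}_j$ there gives $\bm{w}^{*}=T_{k_w}(P_{\infty,+}(\overline{\bm{w}}))$.

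The one delicate point, and the step I would treat most carefully, is tie-breaking at the clipped value $1$: several coordinates with $\overline{w}_j\ge 1$ all have $\widehat{w}_j=1$ yet may carry distinct gains $d_j=2\overline{w}_j-1$, so ``keep the $k_w$ largest entries of $P_{\infty,+}(\overline{\bm{w}})$'' must be read as breaking such ties in favour of the larger $|\overline{w}_j|$ (equivalently, the larger $d_j$). With that convention---and, generically, whenever no ties occur---the stated formula is exactly optimal; in any case every tie-breaking rule yields a point with the same objective value, which is all that the convergence analysis of the outer PALM-type iteration needs. I would record this as a brief remark rather than belabour it inside the proof.
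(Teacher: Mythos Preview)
The paper does not actually supply a proof of Proposition~\ref{prop-2}; it merely states the result and proceeds directly to the update rule~(\ref{equ-13}). Your argument therefore goes beyond what the paper offers, and its two main steps---first solving the box-constrained problem coordinatewise on a fixed support $S$, then choosing $S$ to maximize the per-coordinate gains $d_j=\overline{w}_j^{\,2}-(\widehat{w}_j-\overline{w}_j)^2$---are correct and cleanly executed. The monotonicity argument linking the ordering of $d_j$ to that of $\widehat{w}_j$ is also right.

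The one sentence that needs repair is the last clause of your tie-breaking remark, ``in any case every tie-breaking rule yields a point with the same objective value.'' This is false precisely in the situation you just flagged. If $\overline{w}_i=2$ and $\overline{w}_j=3$ both clip to $\widehat{w}_i=\widehat{w}_j=1$ and only one of the two can enter the top-$k_w$ support, then keeping $j$ contributes $(0-2)^2+(1-3)^2=8$ to the objective while keeping $i$ contributes $(1-2)^2+(0-3)^2=10$; the two tie-breaks are \emph{not} equivalent, and only the one favouring the larger $\overline{w}_j$ (equivalently, the larger $d_j$) is optimal. Your preceding sentence already contains the correct fix, so simply delete the erroneous closing claim, or replace it by the weaker true statement that such ties occur on a measure-zero set and are immaterial in practice. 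With that edit the proof stands.
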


Based on Proposition \ref{prop-2}, we obtain the update rule of $\bm{w}^{t+1}$:
\begin{equation}\label{equ-13}
   \bm{w}^{t+1} := T_{k_w}(P_{\infty,+}(\overline{\bm{w}})),
\end{equation}
where $\overline{\bm{w}} = \bm{w}^{t} - \frac{1}{L_w} \nabla_{\bm{w}} f(\bm{u}^{t+1}, \bm{v}^{t+1}, \bm{w}^{t})$, $P_{\infty,+}(\cdot)$ is defined in Eq. (\ref{equ-12}), $\nabla_{\bm{w}} f(\bm{u},\bm{v},\bm{w}) = -(\bm{X}\bm{u})\odot(\bm{Y}\bm{v})$ and the Lipschitz constant $L_w$ can be set to any constant greater than zero.
\begin{algorithm}[h]
\caption{$\ell_\infty/\ell_0$-norm constrained wsPLS ($\ell_\infty/\ell_0$-wsPLS)} \label{Alg-1}
\begin{algorithmic}[1]
\REQUIRE Data matrices \{$\bm{X}\in \mathbb{R}^{n\times p}$, $\bm{Y}\in \mathbb{R}^{n\times q}$\}, parameters \{$k_u$, $k_v$, $k_w$\}, and Lipschitz constants \{$L_u$, $L_v$, $L_w$\}.
\ENSURE $\bm{u}$, $\bm{v}$, and $\bm{w}$.
\STATE Each column of $\bm{X}$ and $\bm{Y}$ is normalized to have mean zero and variance one.
\STATE Initialize $(\bm{u}^{0}, \bm{v}^{0}, \bm{w}^{0})$ and set $t=0$.
\REPEAT
\STATE Compute $\bm{u}^{t+1}$ according to Eq. (\ref{equ-8}).
\STATE Compute $\bm{v}^{t+1}$ according to Eq. (\ref{equ-10}).
\STATE Compute $\bm{w}^{t+1}$ according to Eq. (\ref{equ-13}).
\STATE $t = t + 1$
\UNTIL convergence.
\RETURN $\bm{u}:=\bm{u}^{t}$, $\bm{v}:=\bm{v}^{t}$ and $\bm{w}:=\bm{w}^{t}$.
\end{algorithmic}
\end{algorithm}

Combing (\ref{equ-8}), (\ref{equ-10}) and (\ref{equ-13}), we propose the block proximal gradient algorithm to solve Eq. (\ref{equ-4}) (See Algorithm \ref{Alg-1}).

\textbf{Initialization}.
We assume that all the samples have equal contribution and initialize $\bm{w}$ as $\bm{w}^{0} = \bm{1}$ (with all entries are ones) in Algorithm \ref{Alg-1}.
Furthermore, Algorithm \ref{Alg-1} is run with multiple random initial points $\bm{u}^{0}$ and $\bm{v}^{0}$.
Finally, the model with the optimal objective function value is selected.

\textbf{Stopping Condition.}
We can use the following ways to stop the algorithm: (i) the stopping criterion can be set as follows:
\begin{equation}\label{equ-14}
\|\bm{u}^{t}-\bm{u}^{t-1}\| + \|\bm{v}^{t}-\bm{v}^{t-1}\| + \|\bm{w}^{t}-\bm{w}^{t-1}\| < 10^{-5},
\end{equation}
where $\bm{u}^{t}$, $\bm{v}^{t}$, and $\bm{w}^{t}$ represent the $t$-th iteration of $\bm{u}$, $\bm{v}$, and $\bm{w}$ respectively.
(ii) the stopping criteria can be set as follows:
\begin{equation}\label{equ-15}
  \frac{|\mbox{obj}(t)-\mbox{obj}(t-1)|}{\mbox{obj}(t-1)} < 10^{-5},
\end{equation}
where $\mbox{obj}(t)$ is the objective function value after the $t$-th iteration.

\subsection{Complexity Analysis}
The computational complexity of Algorithm \ref{Alg-1} mainly depends on the computation of Eq. (\ref{equ-8}), Eq. (\ref{equ-10}) and Eq. (\ref{equ-13}).
Computing Eq. (\ref{equ-8})  is mainly composed of  the computation of $\nabla_{\bm{u}} f(\bm{u},\bm{v},\bm{w}) = -\bm{X}^T[\bm{w}\odot(\bm{Y}\bm{v})]$ which consumes $\mathcal{O}(np + nq + n)$.
Computing Eq. (\ref{equ-10}) is mainly composed of  the computation of $\nabla_{\bm{v}} f(\bm{u},\bm{v},\bm{w}) = -\bm{Y}^T[\bm{w}\odot(\bm{X}\bm{u})]$ which consumes $\mathcal{O}(nq + np + n)$.
Similarly, Computing Eq. (\ref{equ-13}) is mainly composed of the computation of $\nabla_{\bm{w}} f(\bm{u},\bm{v},\bm{w}) = -(\bm{X}\bm{u})\odot(\bm{Y}\bm{v})$ which consumes $\mathcal{O}(nq + np + n)$.
Overall, the complexity of Algorithm \ref{Alg-1} is $\mathcal{O}(tnp + tnq + tn)$, where $t$ is the number of iterations and it is empirically set to 20 in our experiments.

\subsection{Convergence Analysis}
In this subsection, we show that Algorithm \ref{Alg-1} has theoretical convergence guarantees (Theorem \ref{theorem-1}).

\begin{lem} \label{lem1}
$\Phi = -\bm{u}^T\bm{X}^T \mbox{diag}(\bm{w}) \bm{Y}\bm{v} + \delta_{\|\bm{u}\| = 1} + \delta_{\|\bm{u}\|_0 \leq k_u} + \delta_{\|\bm{v}\| = 1} + \delta_{\|\bm{v}\|_0 \leq k_v}+
\delta_{\|\bm{w}\|_\infty \leq 1} + \delta_{\|\bm{w}\|_0 \leq k_w} + \delta_{\bm{w} \geq 0}$ has the Kurdyka-\L{ojasiewicz} (K\L) property.
\end{lem}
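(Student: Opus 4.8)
The plan is to establish the K\L~property not by a direct desingularizing-function estimate, but structurally: I will show that $\Phi$ is a proper, lower semicontinuous, \emph{semialgebraic} function, and then invoke the standard fact that every such function satisfies the K\L~property at each point of its domain (this is exactly the class exploited by the PALM analysis \cite{bo2014proximal}). Thus the work reduces to decomposing $\Phi$ into its summands, checking semialgebraicity of each one, and verifying that properness and lower semicontinuity survive the finite sum.

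First, the differentiable term $-\bm{u}^T\bm{X}^T\mbox{diag}(\bm{w})\bm{Y}\bm{v}$ is a multivariate polynomial in the entries of $\bm{u},\bm{v},\bm{w}$, hence continuous and semialgebraic. For each indicator $\delta_C$ it suffices to show that the feasible set $C$ is closed and semialgebraic, since then $\delta_C$ is lower semicontinuous and its graph $\{(x,0):x\in C\}$ is semialgebraic. I check these sets one at a time: $\{\|\bm{u}\|=1\}$ and $\{\|\bm{v}\|=1\}$ are the zero sets of the polynomials $\sum_i u_i^2-1$ and $\sum_i v_i^2-1$; the box $\{\|\bm{w}\|_\infty\le 1\}$ equals $\{-1\le w_j\le 1,\ \forall j\}$, a polyhedron; and $\{\bm{w}\ge 0\}=\{w_j\ge 0,\ \forall j\}$ is also polyhedral. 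Each of these is a (basic) closed semialgebraic set.

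The only ingredient that needs genuine care is the $\ell_0$ constraint, which is the nonconvex, discontinuous part of the problem. The key observation is that, although $\bm{x}\mapsto\|\bm{x}\|_0$ is not continuous, its sublevel set is a \emph{finite union of coordinate subspaces},
\[
\{\bm{x}\in\mathbb{R}^n:\|\bm{x}\|_0\le k\}=\bigcup_{S\subseteq\{1,\dots,n\},\,|S|\le k}\{\bm{x}:x_j=0\ \text{for all}\ j\notin S\},
\]
and each such subspace is a closed basic semialgebraic set; a finite union of closed semialgebraic sets is again closed and semialgebraic. Applying this to $\|\bm{u}\|_0\le k_u$, $\|\bm{v}\|_0\le k_v$ and $\|\bm{w}\|_0\le k_w$ disposes of the remaining terms, and, incidentally, guarantees that the constraint set of \eqref{equ-4} is closed.

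Finally I assemble the pieces: the class of proper, lower semicontinuous, semialgebraic functions is stable under finite sums, so $\Phi$ is semialgebraic; it is lower semicontinuous as a sum of a continuous function with indicators of closed sets; and it is proper, since the feasible region is nonempty (e.g.\ $\bm{u}=\bm{v}=\bm{e}_1$ and $\bm{w}=\bm{e}_1$ whenever $k_u,k_v,k_w\ge 1$) so $\Phi$ is finite there and never takes the value $-\infty$. Hence $\Phi$ satisfies the K\L~property. The main obstacle is conceptual rather than computational: one must recognize that the $\ell_0$ feasible set, despite the discontinuity of the cardinality function, is a tame (closed semialgebraic) object; once that is in hand, the rest is routine bookkeeping with the stability properties of semialgebraic sets and functions.
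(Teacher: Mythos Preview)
Your proposal is correct and follows essentially the same route as the paper: show each summand of $\Phi$ is semialgebraic, conclude $\Phi$ is semialgebraic, and invoke the K\L~property for semialgebraic functions from \cite{bo2014proximal}. Your treatment is in fact more careful than the paper's brief sketch---for instance, you justify semialgebraicity of the $\ell_\infty$-ball and the nonnegative orthant via their polyhedral descriptions rather than merely citing convexity (which alone does not imply semialgebraicity), and you spell out the finite-union-of-subspaces argument for the $\ell_0$ sublevel sets.
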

\begin{proof}
Inspired by \cite{bo2014proximal}, we have
(1) $\delta_{\|\bm{u}\|_0 \leq k_u}$ and $\delta_{\|\bm{u}\|^2 = 1}$ are semi-algebraic functions;
(2) $\|\bm{w}\|_\infty$ ball is convex, so $\delta_{\|\bm{w}\|_\infty \leq 1}$ is a semi-algebraic function;
(3) $\bm{w} \geq 0$ is a convex constraint, so $\delta_{\bm{w} \geq 0}$ is a semi-algebraic function. Then,
based on Theorem 3 in \cite{bo2014proximal}, $\Phi$ is a semi-algebraic function and has the K\L~property.
\end{proof}
\begin{theorem}\label{theorem-1}
(Globally convergence of Algorithm \ref{Alg-1})
Let $\{(\bm{u}^{t}, \bm{v}^{t}, \bm{w}^{t})\}_{t \in \mathbb{N}}$ be a sequence which are generated for any initial point $(\bm{u}^{0}, \bm{v}^{0}, \bm{w}^{0})$ by using Algorithm \ref{Alg-1}. Then the objective function value $f(\bm{u}^{t}, \bm{v}^{t}, \bm{w}^{t})$ is non-increasing and $\{(\bm{u}^{t}, \bm{v}^{t}, \bm{w}^{t})\}_{t \in \mathbb{N}}$ converges to a critical point.
\end{theorem}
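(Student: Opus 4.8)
The plan is to recognize Algorithm~\ref{Alg-1} as an instance of the PALM scheme of \cite{bo2014proximal} applied to the function $\Phi$ of Lemma~\ref{lem1}, and then to invoke the PALM convergence theorem once its hypotheses have been checked. Write $z := (\bm{u},\bm{v},\bm{w})$ and $\Phi(z) = f(\bm{u},\bm{v},\bm{w}) + g_1(\bm{u}) + g_2(\bm{v}) + g_3(\bm{w})$, where $f$ is the smooth trilinear coupling term of Eq.~(\ref{equ-4}), $g_1 = \delta_{\|\bm{u}\|_0\le k_u} + \delta_{\|\bm{u}\| = 1}$, $g_2 = \delta_{\|\bm{v}\|_0\le k_v} + \delta_{\|\bm{v}\| = 1}$, and $g_3 = \delta_{\|\bm{w}\|_0\le k_w} + \delta_{\|\bm{w}\|_\infty\le 1} + \delta_{\bm{w}\ge 0}$. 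First I would observe that updates~(\ref{equ-8}), (\ref{equ-10}) and (\ref{equ-13}) are precisely proximal--gradient steps for this decomposition: each $g_i$ is the indicator of a closed set, so $\operatorname{prox}_{g_i/L}$ is the Euclidean projection onto that set, and Propositions~\ref{prop-1}--\ref{prop-2} identify these projections with the closed forms $T_{k_u}(\cdot)/\|T_{k_u}(\cdot)\|$, $T_{k_v}(\cdot)/\|T_{k_v}(\cdot)\|$ and $T_{k_w}(P_{\infty,+}(\cdot))$ used in the algorithm. These sets are nonconvex, so the projections are set-valued; this is permitted in PALM, which only requires a measurable selection, and the two propositions supply one. (One should also note $\overline{\bm{u}},\overline{\bm{v}}\neq\bm{0}$, as needed in Proposition~\ref{prop-1}, which holds since $\|\bm{u}^t\| = \|\bm{v}^t\| = 1$ and the gradient step is a bounded perturbation.)

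Next I would check the standing assumptions of PALM. The terms $g_1, g_2, g_3$ are proper and lower semicontinuous because the underlying sets --- an $\ell_0$-ball intersected with the unit sphere, respectively an $\ell_0$-ball intersected with the box $[0,1]^n$ --- are closed. The coupling $f$ is $C^\infty$; from Eq.~(\ref{equ-5}) each partial gradient is affine in its own block (the block Hessians vanish), hence globally Lipschitz in that block with \emph{any} positive modulus $L_u, L_v, L_w > 0$, and each partial gradient is bilinear in the other two blocks and therefore Lipschitz on the bounded region $\mathcal{C} := \{\|\bm{u}\| = \|\bm{v}\| = 1,\ \bm{0}\le\bm{w}\le\bm{1}\}$ in which the entire sequence lies. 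Finally $\inf\Phi > -\infty$ since the feasible set $\mathcal{C}$ is compact and $f$ continuous, and the iterates are automatically bounded for the same reason. Together with the K\L{} property of $\Phi$ from Lemma~\ref{lem1}, all hypotheses of the PALM convergence theorem \cite{bo2014proximal} are met.

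It then remains to run the standard three-part argument. \emph{(i) Sufficient decrease.} Since $\bm{u}^{t+1}$ minimizes $\langle\nabla_{\bm{u}}f(z^t),\bm{u}-\bm{u}^t\rangle + \tfrac{L_u}{2}\|\bm{u}-\bm{u}^t\|^2$ over the feasible set, $\bm{u}^t$ is feasible, and $f$ is affine in $\bm{u}$, one obtains $f(\bm{u}^{t+1},\bm{v}^t,\bm{w}^t)\le f(z^t) - \tfrac{L_u}{2}\|\bm{u}^{t+1}-\bm{u}^t\|^2$, and likewise for the $\bm{v}$- and $\bm{w}$-steps; adding the three gives $\Phi(z^{t+1}) + \rho\|z^{t+1}-z^t\|^2\le\Phi(z^t)$ with $\rho = \tfrac12\min\{L_u,L_v,L_w\}$, so the objective is non-increasing and $\sum_t\|z^{t+1}-z^t\|^2 < \infty$. \emph{(ii) Subgradient bound.} Writing the first-order optimality conditions of the three prox subproblems and using the finite Lipschitz constants of the partial gradients on $\mathcal{C}$, one constructs $d^t\in\partial\Phi(z^t)$ with $\|d^t\|\le C\|z^t-z^{t-1}\|$. \emph{(iii) K\L{} and finite length.} Applying the K\L{} inequality at a cluster point $z^*$ of the bounded sequence and combining (i)--(ii) in the usual way yields $\sum_t\|z^{t+1}-z^t\| < \infty$; hence $\{z^t\}$ is Cauchy and converges to some $z^*$, and closedness of $\partial\Phi$ together with $d^t\to 0$ gives $0\in\partial\Phi(z^*)$, i.e.\ $z^*$ is a critical point. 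The one point that needs care is that PALM is stated with \emph{strictly positive} partial-gradient Lipschitz moduli and step sizes $\gamma_i/L_i$ ($\gamma_i>1$), whereas here the block Hessians are exactly zero; the resolution is that the block descent inequality in (i) holds for \emph{any} $L_i>0$ (the quadratic term is nonnegative and $f$ is exactly affine in each block), so one may substitute the user-chosen positive constants $L_u,L_v,L_w$ and the remainder of the argument goes through verbatim.
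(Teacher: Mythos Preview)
Your proposal is correct and follows essentially the same approach as the paper: identify Algorithm~\ref{Alg-1} as an instance of PALM applied to $\Phi$, invoke Lemma~\ref{lem1} for the K\L{} property, and appeal to the PALM convergence theorem of \cite{bo2014proximal}. The paper's own proof is a one-sentence version of this, whereas you have carefully spelled out the verification of the PALM hypotheses (closedness of the constraint sets, block-Lipschitz gradients, boundedness of iterates, $\inf\Phi>-\infty$) and sketched the sufficient-decrease / subgradient-bound / K\L{} trilogy; but the route is the same.
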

\begin{proof}
References \cite{bo2014proximal} has shown that the PALM algorithm monotonically increasing converges to a critical point if the objective function satisfied the K\L~property. Because Lemma \ref{lem1} has shown that the objective function of the $\ell_\infty/\ell_0$-wsPLS model satisfies the K\L~property and Algorithm \ref{Alg-1} is an algorithm based on the PALM framework, we prove that Algorithm \ref{Alg-1}, i.e., Theorem \ref{theorem-1} holds.
\end{proof}

\subsection{Determination of co-modules}
A co-module is decide with two steps: (1) extract a submatrix in the data marix $\bm{X} \in \mathbb{R}^{n \times p}$, where rows and columns are corresponding to non-zero entries in $\bm{w}$ and $\bm{u}$, respectively; (2) extract another submatrix in the data matrix $\bm{Y} \in \mathbb{Y}^{n \times q}$, where rows and columns are corresponding to non-zero entries in $\bm{w}$ and $\bm{v}$, respectively. The combination of two extracted sub-matrices is called a co-module (see Figure \ref{fig-1}).

We explain how wsPLS identify multiple co-modules. After identifying the first co-module using Algorithm \ref{Alg-1}, we update matrices $\bm{X}$ and $\bm{Y}$ by removing their rows (samples) in the first co-module, and we repeatedly apply  Algorithm \ref{Alg-1} to identify the next co-module on the updated data matrices (see the three co-modules in the rightmost subgraph of Figure \ref{fig-1}).

\section{Multi-view Weighted Sparse PLS}
To identify co-modules on the multi-view data with more than three data matrices, we extend wsPLS and propose two multi-view wsPLS (mwsPLS) models with different schemes.

\subsection{Scheme 1 for mwsPLS using a sum way}
Assume that multiple matrices $\bm{X}_i\in \mathbb{R}^{n\times p_i}$ ($i=1,\cdots,m$) across a common sample set are given and their columns are normalized with zero-mean and unit-variance.
Similar to \cite{witten2009extensions},
we propose the multi-view wsPLS (mwsPLS) model using a sum way, which is called mwsPLS-scheme1 for short as follows:
\begin{equation}\label{equ-16}
\begin{aligned}
& \underset{\bm{u}_1,\cdots,\bm{u}_m, \bm{w}}{\text{maximize}} && \sum_{i < j}\bm{u}_i^T\bm{X}_i^T\mbox{diag}(\bm{w})\bm{X}_j\bm{u}_j \\
& \text{subject to} &&\|\bm{u}_i\| = 1,\|\bm{u}_i\|_0\leq k_i~\forall  i\\
&                   &&\|\bm{w}\|_\infty \leq 1, \|\bm{w}\|_0\leq k_w, \bm{w}\geq 0.\\
\end{aligned}
\end{equation}
Eq. (\ref{equ-16}) is equivalent to
\begin{equation}\label{equ-17}
\begin{aligned}
& \underset{\bm{u}_1,\cdots,\bm{u}_m, \bm{w}}{\text{minimize}} && f(\bm{u},\bm{w}) = - \sum_{i < j}\bm{w}^T[(\bm{X}_i\bm{u}_i)\odot(\bm{X}_j\bm{u}_j)] \\
& \text{subject to} &&\|\bm{u}_i\| = 1,\|\bm{u}_i\|_0\leq k_i~\forall  i\\
&                   &&\|\bm{w}\|_\infty \leq 1, \|\bm{w}\|_0\leq k_w, \bm{w}\geq 0.\\
\end{aligned}
\end{equation}

\textbf{Optimization.}
We observe that the objective function of Eq. (\ref{equ-17}) satisfies
$f(\bm{u},\bm{w}) = - \sum\limits_{i < j}\bm{w}^T[(\bm{X}_i\bm{u}_i)\odot(\bm{X}_j\bm{u}_j)] = -\bm{u}_i^T\bm{X}_i^T[\sum\limits_{i\neq j} \bm{w}\odot(\bm{X}_j\bm{u}_j)]+C$,
where $C$ is independent for any $\bm{u}_i$.
So, we can calculate the partial derivatives of $\bm{u}_i$ ($i=1,\cdots,m$) and $\bm{w}$ as follows:
\begin{equation}\label{equ-18}
\begin{aligned}
&\nabla_{\bm{u}_i} f(\bm{u},\bm{w})&=& - \bm{X}_i^T\sum\limits_{j \neq i} [\bm{w}\odot(\bm{X}_j\bm{u}_j)],\\
&\nabla_{\bm{w}} f(\bm{u},\bm{w})  &=& - \sum_{i < j}[(\bm{X}_i\bm{u}_i)\odot(\bm{X}_j\bm{u}_j)].
\end{aligned}
\end{equation}
Similar to Algorithm \ref{Alg-1}, we develop the block proximal gradient algorithm to solve (\ref{equ-17}) by using iteratively updating way.
The detail algorithm procedure is shown in Algorithm \ref{Alg-2} where Eq. (\ref{equ-19}) and Eq. (\ref{equ-20}) are solved using Proposition \ref{prop-1} and Proposition \ref{prop-2}, respectively.

\begin{algorithm}[htp]
	\caption{Multi-view wsPLS (mwsPLS-scheme1) in Eq. (\ref{equ-16})} \label{Alg-2}
	\begin{algorithmic}[1]
		\REQUIRE Data matrices $\bm{X}_i \in \mathbb{R}^{n\times p_i}$ ($i=1, 2,\cdots, m$), parameters \{$k_1,\cdots,k_m$ and $k_w$\}, and Lipschitz constants \{$L_{u_1},\cdots,L_{u_m}$ and $L_w$\}.
		\ENSURE $\bm{u}_1,\cdots,\bm{u}_m$ and $\bm{w}$.
		\STATE Standardize the columns of $\bm{X}_i$ ($i=1,\cdots,m$).
		\STATE Initialize $(\bm{u}_i^{0},\cdots,\bm{u}_m^{0}, \bm{w}^{0})$ and set $t=0$.
		\REPEAT
		\STATE Compute $\bm{u}_i^{t+1}$ (for $i=1,2,\cdots,m$) by solving the following optimization problem:
		\begin{equation}\label{equ-19}
			\begin{aligned}
				& \underset{\bm{u}}{\text{minimize}} && \|\bm{u} - \overline{\bm{u}}\|_2^2  \\
				& \text{subject to}  && \|\bm{u}\|_0\leq k_i, \|\bm{u}\| = 1,\\
			\end{aligned}
		\end{equation}
		where $\overline{\bm{u}} = \bm{u}_i^{t} - \frac{1}{L_{u_i}} \nabla_{\bm{u}_i} f(\bm{u}^{t},\bm{w}^{t})$ and the partial derivatives $\nabla_{\bm{u}_i} f(\bm{u},\bm{w}) = - \bm{X}_i^T\sum_{j \neq i} [\bm{w}\odot(\bm{X}_j\bm{u}_j)]$.
		\STATE Compute $\bm{w}^{t+1}$ by solving the following problem:
		\begin{equation}\label{equ-20}
			\begin{aligned}
				& \underset{\bm{w}}{\text{minimize}} && \|\bm{w} - \overline{\bm{w}}\|_2^2  \\
				&  && \|\bm{w}\|_0\leq k_w, \|\bm{w}\|_\infty \leq 1, w_j\geq 0~\forall j
			\end{aligned}
		\end{equation}
		where $\overline{\bm{w}} = \bm{w}^{t} - \frac{1}{L_{w}} \nabla_{\bm{w}} f(\bm{u}^{t+1},\bm{w}^{t})$ and the partial derivatives $\nabla_{\bm{w}} f(\bm{u},\bm{w}) = - \sum_{i < j}[(\bm{X}_i\bm{u}_i)\odot(\bm{X}_j\bm{u}_j)]$.
		\STATE $t = t + 1$
		\UNTIL convergence (e.g., $\frac{|\mbox{obj}(t)-\mbox{obj}(t-1)|}{\mbox{obj}(t-1)} < 10^{-5}$).
		\RETURN $\bm{u}_i:=\bm{u}_i^{t}$ for all $i$ and $\bm{w}:=\bm{w}^{t}$.
	\end{algorithmic}
\end{algorithm}

\textbf{Initialization}.
We set $\bm{w}^{0} = \bm{1}$ (all entries are ones) in Algorithm \ref{Alg-2} and randomly initialize \{$\bm{u}_i^{0},~i=1,\cdots,m$\} from a normal distribution.
Algorithm \ref{Alg-2} is run with multiple random initial points and the model with the optimal value is selected.

\textbf{Complexity Analysis.}
The computational burden of Algorithm \ref{Alg-2} mainly depends on the computation of Eq. (\ref{equ-19}) and Eq. (\ref{equ-20}).
The computation of Eq. (\ref{equ-19}) is mainly composed of the computation of $\nabla_{\bm{u}_i} f(\bm{u},\bm{w}) = - \bm{X}_i^T\sum_{j \neq i} [\bm{w}\odot(\bm{X}_j\bm{u}_j)]$ which consumes $\mathcal{O}(\sum_i np_i)$. The computation of Eq. (\ref{equ-20}) is mainly composed of the computation of $\nabla_{\bm{w}} f(\bm{u},\bm{w}) = - \sum_{i < j}[(\bm{X}_i\bm{u}_i)\odot(\bm{X}_j\bm{u}_j)]$ which consumes $\mathcal{O}((m-1)\sum_i np_i)$ because $\sum_{i < j}(np_i+np_j) = (m-1)\sum_i np_i$.
In total, the complexity of Algorithm \ref{Alg-2} is $\mathcal{O}(tmnp_1+tmnp_2+\cdots+tmnp_m)$, where $t$ is the number of iterations and it is empirically set to 20 in our experiments.

\textbf{Convergence Analysis.}
Similar to Lemma \ref{lem1}, we can prove that the objective function of the mwsPLS-scheme1 model satisfies the K\L~property. Therefore, Algorithm \ref{Alg-2} also has theoretical convergence.

\subsection{Scheme 2 for mwsPLS using a product way}
We observe $\bm{u}_1^T\bm{X}_1^T\mbox{diag}(\bm{w})\bm{X}_2\bm{u}_2 = \bm{w}^T[(\bm{X}_1\bm{u}_1)\odot(\bm{X}_2\bm{u}_2)]$,
so that we intuitively extend wsPLS for multi-view data analysis using a product way.
Herein, we propose the second mwsPLS model using the product way, which is called mwsPLS-scheme2 for short as follows:

\begin{equation}\label{equ-21}
\begin{aligned}
& \underset{\bm{u}_1,\cdots,\bm{u}_m, \bm{w}}{\text{minimize}} && f(\bm{u},\bm{w}) = - \bm{w}^T\big[\bigodot\limits_{i=1}^m(\bm{X}_i\bm{u}_i)\big] \\
& \text{subject to} &&\|\bm{u}_i\| = 1,\|\bm{u}_i\|_0\leq k_i~\forall  i\\
&                   &&\|\bm{w}\|_\infty \leq 1, \|\bm{w}\|_0\leq k_w, \bm{w}\geq 0,\\
\end{aligned}
\end{equation}
where $\bigodot\limits_{i=1}^m(\bm{X}_i\bm{u}_i) = (\bm{X}_1\bm{u}_1)\odot(\bm{X}_2\bm{u}_2)\odot \cdots \odot(\bm{X}_m\bm{u}_m)$.

\textbf{Optimization.}
Formally, the objective function in Eq. (\ref{equ-21}) is equivalent to
\begin{equation}\label{equ-22}
f(\bm{u},\bm{w}) = -\bm{u}_i^T [\bm{X}_i^T(\bm{w}\odot\bm{z}_{i})],
\end{equation}
where $\bm{z}_{i}=\bigodot\limits_{j\neq i}^m(\bm{X}_j\bm{u}_j)$.
Therefore, we calculate the partial derivatives for $\bm{u}_i$ ($i=1,\cdots,m$) and $\bm{w}$ as follows:
\begin{equation}\label{equ-23}
	\begin{aligned}
		&\nabla_{\bm{u}_i} f(\bm{u},\bm{w})&=& -\bm{X}_i^T(\bm{w}\odot\bm{z}_{i}), \forall i\\
		&\nabla_{\bm{w}} f(\bm{u},\bm{w})&=& -\bigodot\limits_{i=1}^m(\bm{X}_i\bm{u}_i).
	\end{aligned}
\end{equation}
Finally, we solve the optimization problem (\ref{equ-21}) using a similar manner as Algorithm \ref{Alg-2} and the detail procedure is shown in Algorithm \ref{Alg-3}.

\begin{algorithm}[htp]
\caption{Multi-view wsPLS (mwsPLS-scheme2) in Eq. (\ref{equ-21})} \label{Alg-3}
\begin{algorithmic}[1]
\REQUIRE Data matrices $\bm{X}_i \in \mathbb{R}^{n\times p_i}$ ($i=1, 2,\cdots, m$), parameters \{$k_1,\cdots,k_m$ and $k_w$\} and Lipschitz constants \{$L_{u_1},\cdots,L_{u_m}$ and $L_w$\}.
\ENSURE $\bm{u}_1,\cdots,\bm{u}_m$ and $\bm{w}$.
\STATE Standardize the columns of $\bm{X}_i$ ($i=1,\cdots,m$).
\STATE Initialize $(\bm{u}_i^{0},\cdots,\bm{u}_m^{0}, \bm{w}^{0})$ and set $t=0$.
\REPEAT
\STATE Compute $\bm{u}_i^{t+1}$ (for $i=1,2,\cdots,m$) by solving the following optimization problem:
\begin{equation}\label{equ-24}
	\begin{aligned}
		& \underset{\bm{u}}{\text{minimize}} && \|\bm{u} - \overline{\bm{u}}\|_2^2  \\
		& \text{subject to}  && \|\bm{u}\|_0\leq k_i, \|\bm{u}\| = 1,\\
	\end{aligned}
\end{equation}
where $\overline{\bm{u}} = \bm{u}_i^{t} - \frac{1}{L_{u_i}} \nabla_{\bm{u}_i} f(\bm{u}^{t},\bm{w}^{t})$, the partial derivatives $\nabla_{\bm{u}_i} f(\bm{u},\bm{w}) = -\bm{X}_i^T(\bm{w}\odot\bm{z}_{i})$ and $\bm{z}_{i}=\bigodot\limits_{j\neq i}^m(\bm{X}_j\bm{u}_j)$.
\STATE Compute $\bm{w}^{t+1}$ by solving the following problem:
\begin{equation}\label{equ-25}
	\begin{aligned}
		& \underset{\bm{w}}{\text{minimize}} && \|\bm{w} - \overline{\bm{w}}\|_2^2  \\
		&  && \|\bm{w}\|_0\leq k_w, \|\bm{w}\|_\infty \leq 1, w_j\geq 0~\forall j
	\end{aligned}
\end{equation}
where $\overline{\bm{w}} = \bm{w}^{t} - \frac{1}{L_w} \nabla_{\bm{w}} f(\bm{u}^{t+1},\bm{w}^{t})$ and the partial derivatives $\nabla_{\bm{w}} f(\bm{u},\bm{w}) = -\bigodot\limits_{i=1}^m(\bm{X}_i\bm{u}_i)$.
\STATE $t = t + 1$
\UNTIL convergence (e.g., $\frac{|\mbox{obj}(t)-\mbox{obj}(t-1)|}{\mbox{obj}(t-1)} < 10^{-5}$).
\RETURN $\bm{u}_i:=\bm{u}_i^{t}$ for all $i$ and $\bm{w}:=\bm{w}^{t}$.
\end{algorithmic}
\end{algorithm}

\textbf{Initialization}.
The initial point setting of Algorithm \ref{Alg-3} is the same as that of Algorithm \ref{Alg-2}, and the description is omitted here.

\textbf{Complexity Analysis.}
The complexity of $\nabla_{\bm{u}_i} f(\bm{u},\bm{w}) = -\bm{X}_i^T(\bm{w}\odot\bm{z}_{i})$ and $\bm{z}_{i}=\bigodot\limits_{j\neq i}^m(\bm{X}_j\bm{u}_j)$ is $\mathcal{O}(\sum_i np_i)$.
The complexity of $\nabla_{\bm{w}} f(\bm{u},\bm{w}) = -\bigodot\limits_{i=1}^m(\bm{X}_i\bm{u}_i)$ is also $\mathcal{O}(\sum_i np_i)$.
So, the complexity of Algorithm \ref{Alg-3} is $\mathcal{O}(tmnp_1+tmnp_2+\cdots+tmnp_m)$, where $t$ is the number of iterations and it is empirically set to 20 in our experiments.

\textbf{Convergence Analysis.}
Similar to Lemma \ref{lem1}, we can also prove that the objective function of the mwsPLS-scheme2 model satisfies the K\L~property.
So, Algorithm \ref{Alg-3} also has theoretical convergence guarantees and globally converges to a critical point.

\section{Experiments}
In this section, we evaluate the effectiveness of these proposed methods for co-module identification and compare them with other competing methods on the synthetic and biomedical data.

\subsection{Competing Methods}\label{chapter-3.1}
We compare the performance of the proposed methods with PLS and its variants:
\begin{itemize}
  \item PLS: It is a classical model without any sparse constrain \cite{rosipal2005overview}:
  \begin{equation*}
  \begin{aligned}
   & \underset{\bm{u},\bm{v}}{\text{maximize}} && \bm{u}^T\bm{X}^T\bm{Y}\bm{v} \\
   & \text{subject to}  && \|\bm{u}\| = \|\bm{v}\| = 1,
  \end{aligned}
  \end{equation*}
  which is solved using the single value decomposition (SVD) method for $\bm{X}^T\bm{Y}$.

  \item PMD-sPLS: It is a sparse model which uses the $\ell_1$-norm constraint, which is inspired by \cite{witten2009extensions}:
  \begin{equation*}
  \begin{aligned}
   & \underset{\bm{u},\bm{v}}{\text{maximize}} && \bm{u}^T\bm{X}^T\bm{Y}\bm{v} \\
   & \text{subject to}  && \|\bm{u}\| = \|\bm{v}\| = 1, \|\bm{u}\| < c_1, \|\bm{v}\| < c_2,
  \end{aligned}
  \end{equation*}
  which is solved by the penalized matrix decomposition (PMD) method \cite{witten2009extensions}.

  \item $\ell_0$-sPLS: It is a sparse PLS model which uses the $\ell_0$-norm constraint, which is inspired by \cite{asteris2016simple,min2016two}:
  \begin{equation*}
  \begin{aligned}
   & \underset{\bm{u},\bm{v}}{\text{maximize}} && \bm{u}^T\bm{X}^T\bm{Y}\bm{v} \\
   & \text{subject to}  && \|\bm{u}\| = \|\bm{v}\| = 1, \|\bm{u}\|_0\leq k_u, \|\bm{v}\|_0\leq k_v,
  \end{aligned}
  \end{equation*}
  which is solved using an alternating iterative algorithm.

  \item $\ell_2/\ell_0$-wsPLS: It is a weighted sparse PLS model which uses the $\ell_2/\ell_0$-norm constrains for the weighed vector, which is inspired by \cite{wenwen2018sparse}.
  \begin{equation*}
  \begin{aligned}
  & \underset{\bm{u},\bm{v},\bm{w}}{\text{maximize}} && \bm{u}^T\bm{X}^T \mbox{diag}(\bm{w}) \bm{Y}\bm{v} \\
  & \text{subject to}  && \|\bm{u}\|_0\leq k_u, \|\bm{v}\|_0\leq k_v, \|\bm{u}\| = \|\bm{v}\| = 1,\\
  &  && \|\bm{w}\|_2 = 1, \|\bm{w}\|_0\leq k_w,
  \end{aligned}
  \end{equation*}
  which is also solved using the proposed Algorithm \ref{Alg-1} with a small change.
\end{itemize}

\begin{figure*}[htbp]
  \centering
  \includegraphics[width=1\linewidth]{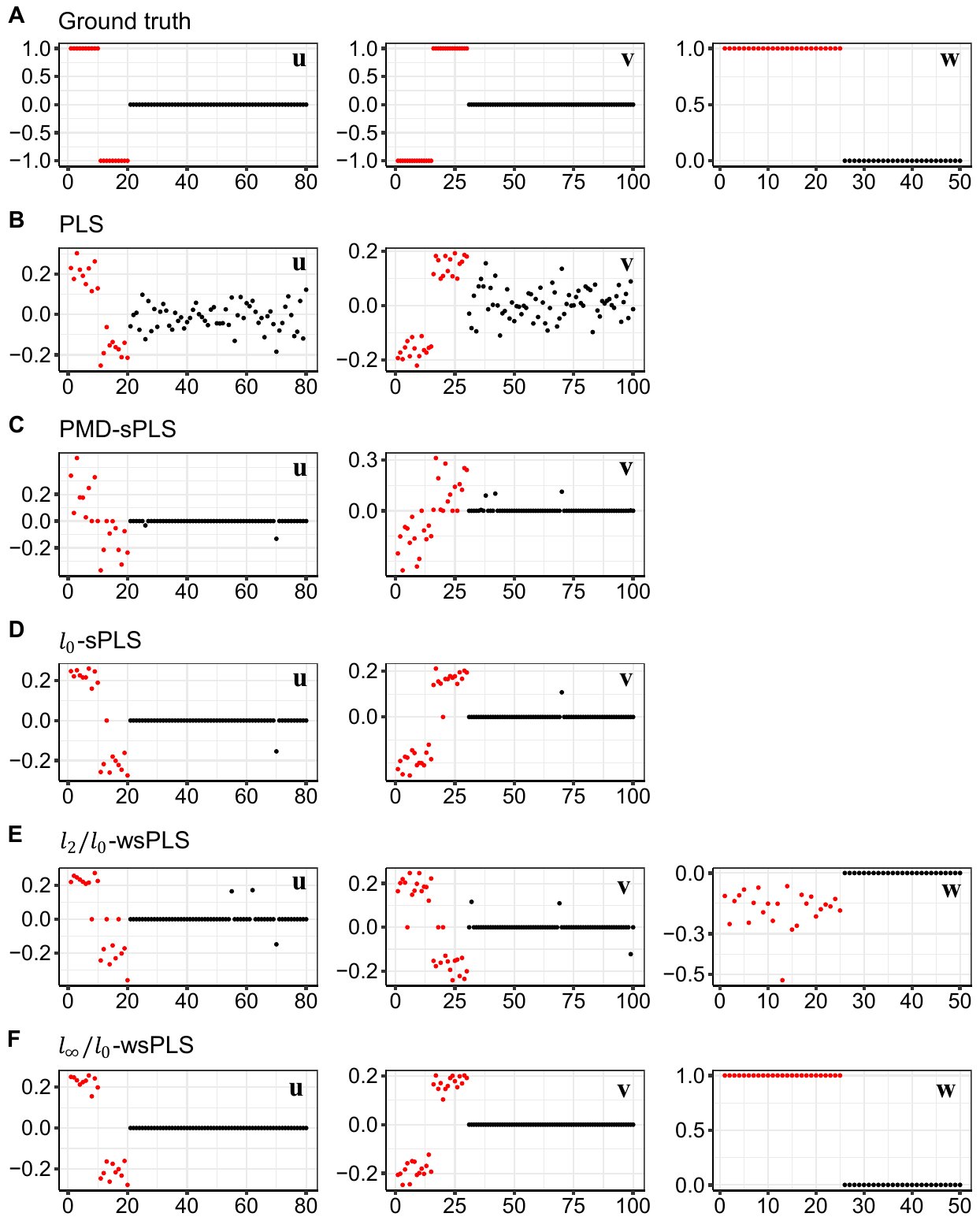}
  \caption{We compare the performance of wsPLS with PLS and its variants on the simulation data I.
  The points (in red) are truly nonzero and the points (in black) are truly zero.
  (A) Showing the true $\bm{u}$, $\bm{v}$ and $\bm{w}$;
  (B) Showing the estimated $\bm{u}$, $\bm{v}$ and $\bm{w}$ by PLS;
  (C) Showing the estimated $\bm{u}$ and $\bm{v}$ by PMD-sPLS;
  (D) Showing the estimated $\bm{u}$ and $\bm{v}$ by $\ell_0$-sPLS;
  (E) Showing the estimated $\bm{u}$, $\bm{v}$ and $\bm{w}$ by $\ell_2/\ell_0$-wsPLS;
  (F) Showing the estimated $\bm{u}$, $\bm{v}$ and $\bm{w}$ by $\ell_\infty/\ell_0$-wsPLS.
  }\label{fig-2}
\end{figure*}

\subsection{Application to Synthetic Data}
To generate the synthetic data of $\bm{X}\in \mathbb{R}^{n\times p}$ and $\bm{Y}\in \mathbb{R}^{n\times q}$ ($n$ is the number of samples, $p$ and $q$ are the number of two different features, respectively), we firstly generated $\bm{u}$, $\bm{v}$ and $\bm{w}$, and then generate them by
\begin{equation}\label{equ-26}
  \begin{array}{rl}
    \bm{X}~~= & \bm{w}\bm{u}^T + \gamma_1 \bm{\epsilon}_1, \\
    \bm{Y}~~= & \bm{w}\bm{v}^T + \gamma_2 \bm{\epsilon}_2,
  \end{array}
\end{equation}
where the elements of $\bm{\epsilon}_1$ and $\bm{\epsilon}_2$ are randomly sampled from a standard normal distribution and $\gamma_1$, $\gamma_2$ are two nonnegative parameters to control the signal-to-noise ratio (SNR) which is defined by
\begin{equation}\label{equ-27}
  \begin{array}{rl}
    \mbox{SNR}_1~= & \frac{\|\bm{w}\bm{u}\|_F^2}{\mathbf{E}(\|\gamma_1 \bm{\epsilon}_1\|_F^2)} = \frac{\|\bm{w}\bm{u}\|_F^2}{\gamma_1^2np}, \\
    \mbox{SNR}_2~= & \frac{\|\bm{w}\bm{v}\|_F^2}{\mathbf{E}(\|\gamma_2 \bm{\epsilon}_2\|_F^2)} = \frac{\|\bm{w}\bm{v}\|_F^2}{\gamma_2^2nq},
  \end{array}
\end{equation}
where $\mathbf{E}(\|\gamma_1 \bm{\epsilon}_1\|_F^2)$ and $\mathbf{E}(\|\gamma_2 \bm{\epsilon}_2\|_F^2)$ denote the expected sum of squares of noise.

\textbf{(1) Simulation data I ($n$ = 50, $p$ = 80, $p$ = 100).}
We firstly generate three vectors $\bm{u}$, $\bm{v}$ and $\bm{w}$ using
\begin{equation*}
  \begin{array}{rl}
    \bm{u}~~= & [rep(1,10),  rep(-1,10), rep(0,p-20)]^{\rm{T}},\\
    \bm{v}~~= & [rep(-1,15), rep(1,15),  rep(0,q-30)]^{\rm{T}},\\
    \bm{w}~~= & [rep(1,25),  rep(0,n-25)]^{\rm{T}},\\
  \end{array}
\end{equation*}
where $rep(x,n)$ denotes a row vector of size $n$ with all elements are equal to $x$.
We then generate the simulation data I using the formula (\ref{equ-26}) with $\mbox{SNR}_1=\mbox{SNR}_2=0.1$.

\textbf{(2) Simulation data II ($n$ = 100, $p$ = 800, $p$ = 1000).}
We firstly generate three vectors $\bm{u}$, $\bm{v}$ and $\bm{w}$ using
\begin{equation*}
  \begin{array}{rl}
    \bm{u}~~= & [rep(1,100),  rep(-1,100), rep(0,p-200)]^{\rm{T}},\\
    \bm{v}~~= & [rep(-1,150), rep(1,150),  rep(0,q-300)]^{\rm{T}},\\
    \bm{w}~~= & [rep(1,50),  rep(0,n-50)]^{\rm{T}}.\\
  \end{array}
\end{equation*}
We then generate the simulation data II using the formula (\ref{equ-26}) with $\mbox{SNR}_1=\mbox{SNR}_2=0.1$.

\textbf{(3) Simulation data III ($n$ = 1000, $p$ = 8000, $p$ = 10000).}
We firstly generate three vectors $\bm{u}$, $\bm{v}$ and $\bm{w}$ using
\begin{equation*}
  \begin{array}{rl}
    \bm{u}~~= & [rep(1,1000),  rep(-1,1000), rep(0,p-2000)]^{\rm{T}},\\
    \bm{v}~~= & [rep(-1,1500), rep(1,1500),  rep(0,q-3000)]^{\rm{T}},\\
    \bm{w}~~= & [rep(1,250),   rep(0,n-250)]^{\rm{T}}.\\
  \end{array}
\end{equation*}
We then generate the simulation data III using the formula (\ref{equ-26}) with $\mbox{SNR}_1=\mbox{SNR}_2=0.1$.

We compare the performance of $\ell_\infty/\ell_0$-wsPLS with $\ell_2/\ell_0$-wsPLS \cite{wenwen2018sparse}, two sparse PLS methods including $\ell_0$-sPLS \cite{asteris2016simple} and PMD-sPLS with $\ell_1$-penalty \cite{witten2009penalized}, and the classical PLS \cite{rosipal2005overview}.
All the experiments are performed on a laptop (2.8-GHz Intel 8-Core CPU and 16-GB RAM).
Since both PLS and sPLS cannot estimate $\bm{w}$, we assume that $\bm{w} = \bm{1}$ for the convenience of comparison.

For the simulation data I, we set parameters $k_u = 20$, $k_v = 30$ and $k_w = 25$ for $\ell_\infty$-wsPLS and $\ell_2/\ell_0$-wsPLS.
For comparison, we set $k_u = 20$, $k_v = 30$ for $\ell_0$-sPLS and ensure that $\bm{u}$ and $\bm{v}$ of PMD-sPLS output have the same sparsity level.
For the simulation data II, we set parameters $k_u = 200$, $k_v = 300$ and $k_w = 50$ for $\ell_\infty$-wsPLS and $\ell_2/\ell_0$-wsPLS.
Meanwhile, we set $k_u = 200$, $k_v = 300$ for $\ell_0$-sPLS and ensure that $\bm{u}$ and $\bm{v}$ of PMD-sPLS output have the same sparsity level.
For the simulation data III, we set parameters $k_u = 2000$, $k_v = 3000$ and $k_w = 250$ for $\ell_\infty$-wsPLS and $\ell_2/\ell_0$-wsPLS.
For comparison, we set $k_u = 2000$, $k_v = 3000$ for $\ell_0$-sPLS and ensure that $\bm{u}$ and $\bm{v}$ of PMD-sPLS output have the same sparsity level.

A optimal method should identify the true non-zero patterns of $\bm{u}$, $\bm{v}$ and $\bm{w}$ as Figure \ref{fig-2}~(A).
We evaluate the performance of all methods by three evaluation metrics including true positive rate (TPR), true negative rate (TNR) and accuracy (ACC):
\begin{equation}\label{equ-28}
  \mbox{TPR} = \frac{\mbox{TP}}{\mbox{P}},~\mbox{TNR} = \frac{\mbox{TN}}{\mbox{N}},~\mbox{ACC} = \frac{\mbox{TP+TN}}{\mbox{P+N}},
\end{equation}
where \mbox{P} denotes the number of positive samples (i.e., non-zero values), \mbox{N} denotes the number of negative samples (i.e., zero values), \mbox{TP} denotes the number of true positive, \mbox{TN} denotes the number of true negative.
\begin{table*}[htbp]
\caption{Performance comparison of different methods in terms of ACC, TPR, TNR and running time (in seconds) on three simulated datasets. Mean and standard derivation of each performance measure is calculated based on 20 simulation runs. The evaluation metrics (ACC, TPR, and TNR) are defined in Eq.(\ref{equ-28}). For example $ACC\_u$, $ACC\_v$ and $ACC\_w$ represent the evaluation of $\bm{u}$, $\bm{v}$ and $\bm{w}$ variables respectively, while $ACC\_all$  represents the evaluation of all variables (i.e. the combination of $\bm{u}$, $\bm{v}$, $\bm{w}$).}\label{tab-2}
\centering
\resizebox{2\columnwidth}{!}{
\begin{tabular}{@{\vrule height8pt depth2pt width0pt} l|cccc|cccc|cccc|c}
  \toprule
  Data I    & \multicolumn{13}{c}{Simulation data I: $n$ = 50, $p$ = 80, $q$ = 100}  \\
  \hline
  Metrics & \multicolumn{4}{c|}{ACC} & \multicolumn{4}{c|}{TPR} & \multicolumn{4}{c|}{TNR} & \multicolumn{1}{c}{Time} \\
  \hline
  & ACC\_all &ACC\_u & ACC\_v & ACC\_w & TPR\_all & TPR\_u & TPR\_v & TPR\_w & TNR\_all & TNR\_u & TNR\_v & TNR\_w &Time\\
  \hline
  PLS & 0.326 & 0.250 & 0.300 & 0.500 & \bf{1.000} & \bf{1.000} & \bf{1.000} & \bf{1.000} & 0.000 & 0.000 & 0.000 & 0.000 & 0.004 \\
      &(0.000) &(0.000) &(0.000) &(0.000) &(0.000) &(0.000) &(0.000) &(0.000) &(0.000) &(0.000) &(0.000) &(0.000) &(0.010)\\
  PMD-sPLS & 0.800 & 0.916 & 0.856 & 0.500 & 0.880 & 0.815 & 0.823 & 1.000 & 0.761 & 0.950 & 0.870 & 0.000 & 0.016 \\
      &(0.016) &(0.023) &(0.034) &(0.000) &(0.025) &(0.067) &(0.040) &(0.000) &(0.016) &(0.015) &(0.039) &(0.000) &(0.007) \\
  $\ell_0$-sPLS & 0.870 & 0.980 & 0.966 & 0.500 & 0.967 & 0.960 & 0.943 & 1.000 & 0.823 & 0.987 & 0.976 & 0.000 & 0.029 \\
      &(0.010) &(0.022) &(0.016) &(0.000) &(0.015) &(0.044) &(0.026) &(0.000) &(0.007) &(0.015) &(0.011) &(0.000) &(0.008) \\
  $\ell_2/\ell_0$-wsPLS & 0.710 & 0.728 & 0.692 & 0.716 & 0.555 & 0.455 & 0.487 & 0.716 & 0.785 & 0.818 & 0.780 & 0.716 & 0.023 \\
      &(0.135) &(0.125) &(0.118) &(0.214) &(0.207) &(0.249) &(0.196) &(0.214) &(0.100) &(0.083) &(0.084) &(0.214) &(0.007) \\
  $\ell_\infty/\ell_0$-wsPLS & \bf{0.979} & \bf{0.980} & \bf{0.972} & \bf{0.992} & 0.968 & 0.960 & 0.953 & 0.992 & \bf{0.985} & \bf{0.987} & \bf{0.980} & \bf{0.992} & 0.056 \\
      &(0.016) &(0.019) &(0.016) &(0.024) &(0.024) &(0.037) &(0.027) &(0.024) &(0.012) &(0.012) &(0.011) &(0.024) &(0.007) \\
  \hline
  \hline

  Data II   & \multicolumn{13}{c}{Simulation data II: $n$ = 100, $p$ = 800, $q$ = 1000}  \\
  \hline
    & ACC\_all &ACC\_u & ACC\_v & ACC\_w & TPR\_all & TPR\_u & TPR\_v & TPR\_w & TNR\_all & TNR\_u & TNR\_v & TNR\_w &Time\\
  \hline
  PLS & 0.289 & 0.250 & 0.300 & 0.500 & \bf{1.000} & \bf{1.000} & \bf{1.000} & \bf{1.000} & 0.000 & 0.000 & 0.000 & 0.000 & 0.065 \\
  &(0.000) &(0.000) &(0.000) &(0.000) &(0.000) &(0.000) &(0.000) &(0.000) &(0.000) &(0.000) &(0.000) &(0.000) &(0.008) \\
  PMD-sPLS & 0.868 & 0.795 & 0.963 & 0.500 & 0.816 & 0.625 & 0.913 & 1.000 & 0.889 & 0.853 & 0.984 & 0.000 & 0.079 \\
  &(0.006) &(0.010) &(0.006) &(0.000) &(0.008) &(0.014) &(0.013) &(0.000) &(0.006) &(0.010) &(0.004) &(0.000) &(0.012) \\
  $\ell_0$-sPLS & 0.926 & 0.890 & 0.997 & 0.500 & 0.917 & 0.780 & 0.995 & 1.000 & 0.929 & 0.927 & 0.998 & 0.000 & 0.114 \\
     &(0.005) &(0.013) &(0.001) &(0.000) &(0.009) &(0.026) &(0.002) &(0.000) &(0.004) &(0.009) &(0.001) &(0.000) &(0.009)  \\
  $\ell_2/\ell_0$-wsPLS & 0.617 & 0.627 & 0.605 & 0.654 & 0.338 & 0.254 & 0.342 & 0.654 & 0.730 & 0.751 & 0.718 & 0.654 & 0.026 \\
     &(0.027) &(0.013) &(0.035) &(0.179) &(0.047) &(0.025) &(0.059) &(0.179) &(0.019) &(0.008) &(0.025) &(0.179) &(0.008)  \\
  $\ell_\infty/\ell_0$-wsPLS & \bf{0.953} & \bf{0.891} & \bf{0.997} & \bf{1.000} & 0.918 & 0.783 & 0.995 & 1.000 & \bf{0.967} & \bf{0.927} & \bf{0.998} & \bf{1.000} & 0.117 \\
     &(0.005) &(0.013) &(0.001) &(0.000) &(0.009) &(0.026) &(0.002) &(0.000) &(0.004) &(0.009) &(0.001) &(0.000) &(0.010)  \\
  \hline
  \hline

  Data III   & \multicolumn{13}{c}{Simulation data III: $n$ = 500, $p$ = 8000, $q$ = 10000}  \\
  \hline
   & ACC\_all &ACC\_u & ACC\_v & ACC\_w & TPR\_all & TPR\_u & TPR\_v & TPR\_w & TNR\_all & TNR\_u & TNR\_v & TNR\_w &Time\\
  \hline
  PLS & 0.284 & 0.250 & 0.300 & 0.500 & \bf{1.000} & \bf{1.000} & \bf{1.000} & \bf{1.000} & 0.000 & 0.000 & 0.000 & 0.000 & 11.965 \\
   &(0.000) &(0.000) &(0.000) &(0.000) &(0.000) &(0.000) &(0.000) &(0.000) &(0.000) &(0.000) &(0.000) &(0.000) &(0.097)\\
  PMD-sPLS & 0.940 & 0.919 & 0.979 & 0.500 & 0.892 & 0.821 & 0.931 & 1.000 & 0.960 & 0.952 & 1.000 & 0.000 & 11.999 \\
   &(0.002) &(0.004) &(0.001) &(0.000) &(0.003) &(0.007) &(0.003) &(0.000) &(0.001) &(0.003) &(0.000) &(0.000) &(0.205)  \\
  $\ell_0$-sPLS & 0.976 & 0.977 & 1.000 & 0.500 & 0.982 & 0.954 & 1.000 & 1.000 & 0.974 & 0.985 & 1.000 & 0.000 & 26.078 \\
   &(0.001) &(0.002) &(0.000) &(0.000) &(0.001) &(0.004) &(0.000) &(0.000) &(0.001) &(0.001) &(0.000) &(0.000) &(0.238)  \\
  $\ell_2/\ell_0$-wsPLS & 0.921 & 0.907 & 0.928 & 0.991 & 0.860 & 0.813 & 0.880 & 0.991 & 0.945 & 0.938 & 0.949 & 0.991 & 0.360 \\
   &(0.138) &(0.138) &(0.143) &(0.019) &(0.243) &(0.277) &(0.239) &(0.019) &(0.096) &(0.092) &(0.102) &(0.019) &(0.029)  \\
  $\ell_\infty/\ell_0$-wsPLS & \bf{0.990} & \bf{0.977} & \bf{1.000} & \bf{1.000} & 0.982 & 0.954 & 1.000 & 1.000 & \bf{0.993} & \bf{0.985} & \bf{1.000} & \bf{1.000} & 5.948 \\
   &(0.001) &(0.002) &(0.000) &(0.000) &(0.001) &(0.004) &(0.000) &(0.000) &(0.001) &(0.001) &(0.000) &(0.000) &(0.057) \\
  \toprule
\end{tabular}\label{tab:5}
}
\end{table*}

Figure \ref{fig-2} shows the estimated patterns of $\bm{u}$, $\bm{v}$ and $\bm{w}$ on the simulation data I.
Table \ref{tab-2} shows the detailed performance results of different methods in terms of ACC, TPR, TNR and running time (seconds).
We find
(1) the sparse PLS methods including PMD-sPLS and $\ell_0$-sPLS cannot detect the pattern of $\bm{w}$ (samples) and they identify $\bm{u}$ and $\bm{v}$ with some wrong points;
(2) compared to $\ell_2/\ell_0$-wsPLS, $\ell_\infty$-wsPLS can identify more correct pattern of $\bm{w}$.
In short, $\ell_\infty$-wsPLS not only discovers the true non-zero patterns for $\bm{u}$, $\bm{v}$, but also identifies the true non-zero patterns for $\bm{w}$ (samples).

\subsection{Application to Paired DNA Copy Number Variation and Gene Expression Data}
We demonstrate the $\ell_\infty/\ell_0$-wsPLS method on the comparative genomic hybridization (CGH) data with matched samples from ``PMA'' R package \cite{witten2009penalized}, which contains the matched DNA copy number variation (CNV) data $\bm{X}\in \mathbb{R}^{89 \times 2149}$ and gene expression data $\bm{Y}\in \mathbb{R}^{89 \times 19672}$. Here, our aim is to identify a gene set whose expression is strongly correlated with CNV across a subset of samples.

We apply $\ell_\infty/\ell_0$-wsPLS on the CGH dataset with $k_u = 50$, $k_v = 150$ and $k_w = 89*0.8$ and identify a co-expression module of CNV and gene which contains 50 CNV regions, 150 genes and 71 patients.
For comparison, we also apply $\ell_0$-sPLS with $k_u = 50$ and $k_v = 150$, PMD-sPLS with $c_1=0.128$, $c_2=0.0665$ for learning $\bm{u}$ and $\bm{v}$ with the same sparse level. The setting of $\ell_2/\ell_0$-wsPLS is same as $\ell_\infty/\ell_0$-wsPLS.
We can first see that the co-module identified by $\ell_\infty/\ell_0$-wsPLS has a strong co-correlation pattern (Figure \ref{fig-5} (A), (B) and (C)).
Second, we find that the proposed $\ell_\infty/\ell_0$-wsPLS obtains the largest Pearson correlation coefficient (PCC) $r=0.901$, while
$\ell_2/\ell_0$-wsPLS obtains $r=0.816$, $\ell_0$-sPLS obtains $r=0.802$ and PMD-sPLS obtains $r=0.779$ (Figure \ref{fig-5} (D)).
These results show that wsPLS is more effective to capture the latent patterns by selecting a specific subset of samples than $\ell_0$-sPLS and PMD-sPLS methods.

\begin{figure*}[htbp]
	\centering
	\includegraphics[width=1\linewidth]{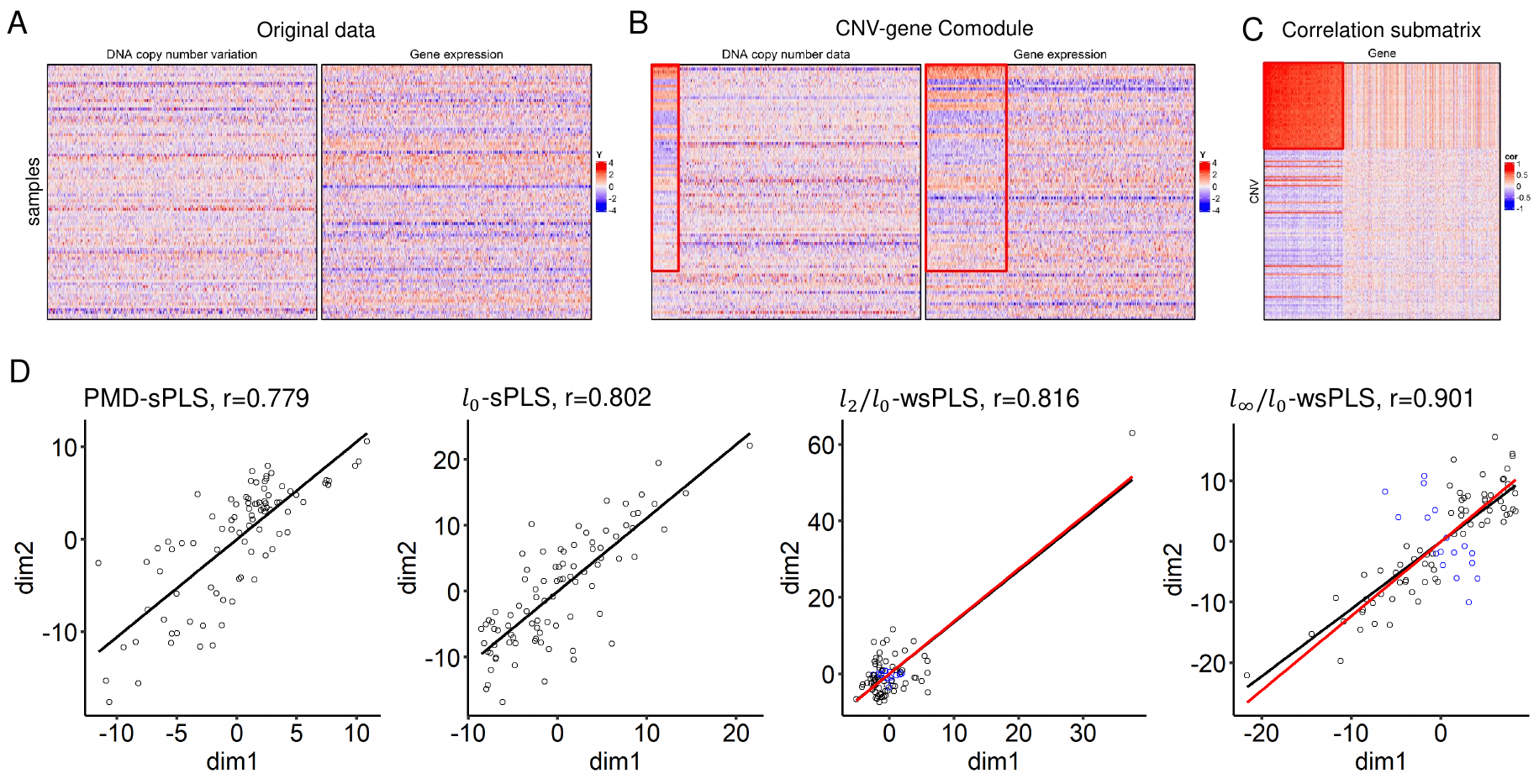}
	\caption{Results on the paired DNA copy number variation and gene expression data of CGH.
		(A) Showing heatmaps of input data.
		(B) Showing heatmaps of the identified co-module (circled in red lines) by wsPLS and randomly selected features for comparison.
		(C) Showing heatmaps of correlation submatrix for the identified co-module.
		(D) Showing scatterplots of dim1 ($\bm{Xu}$) vs. dim2 ($\bm{Xv}$) for four different methods,
		including PMD-sPLS and $\ell_0$-sPLS, $\ell_2/\ell_0$-wsPLS and $\ell_\infty/\ell_0$-wsPLS (see section \ref{chapter-3.1} for more details) where
		the blue points in the third and fourth pictures correspond to $w_j=0$ and
		the black line is fitted using all data points and the red line is fitted using only black data points, and Pearson correlation coefficient (PCC) $r$ is displayed.
	}\label{fig-5}
\end{figure*}
\begin{figure*}[htbp]
	\centering
	\includegraphics[width=1\linewidth]{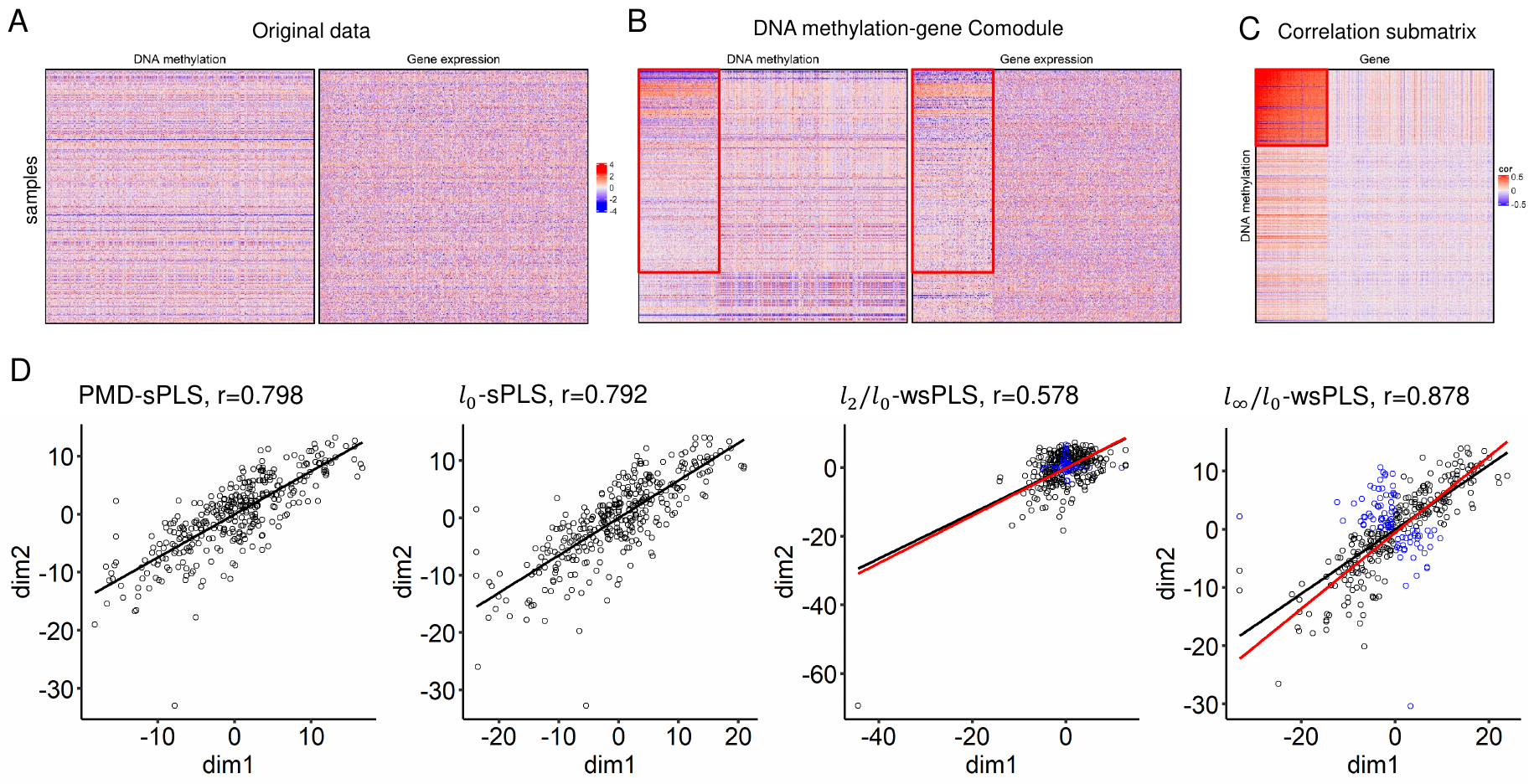}
	\caption{Results on the paired DNA methylation and gene expression data of LUAD.
		(A) Showing heatmaps of input data.
		(B) Showing heatmaps of the identified co-module (circled in red lines) by wsPLS and randomly selected features for comparison.
		(C) Showing heatmaps of correlation submatrix for the identified co-module.
		(D) Showing scatterplots of dim1 ($\bm{Xu}$) vs. dim2 ($\bm{Xv}$) for four different methods, where the blue points in the third and fourth pictures correspond to $w_j=0$ and the black line is fitted using all data points and the red line is fitted using only black data points, and PCC $r$ is displayed.
	}\label{fig-6}
\end{figure*}

\begin{figure*}[htbp]
	\centering
	\includegraphics[width=0.77\linewidth]{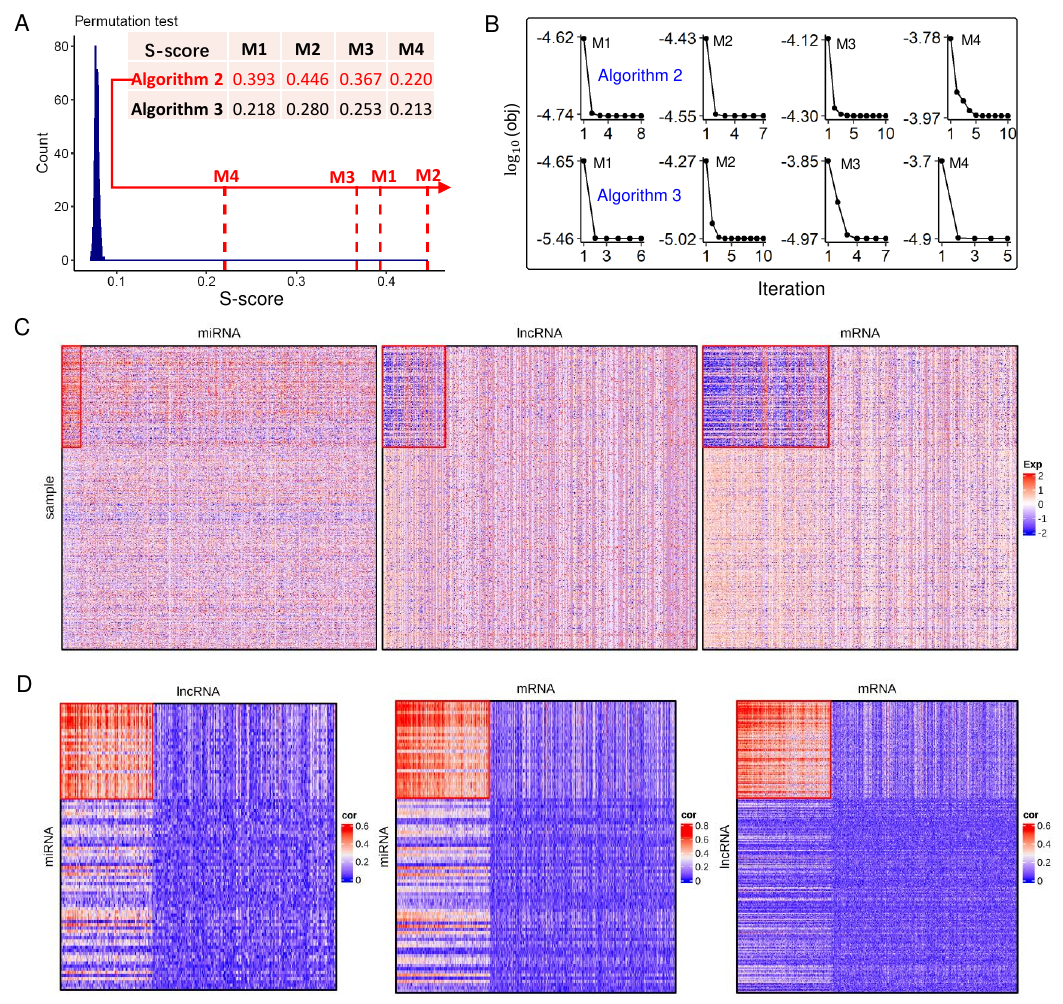}
	\caption{Application of the proposed Algorithm \ref{Alg-2} and \ref{Alg-3} on the BRCA miRNA-lncRNA-mRNA dataset.
    (A) Distribution of the $S$ scores (See Eq. (\ref{equ-30}) for definition). The $S$ scores of identified modules are significantly greater than those of random ones (Permutation test $P$ < 0.01). The table in the upper right corner corresponds to the $S$ scores of the co-modules identified by mwsPLS-scheme1 (Algorithm \ref{Alg-2}) and mwsPLS-scheme2 (Algorithm \ref{Alg-3}).
    (B) Showing the convergence curves of the proposed Algorithm \ref{Alg-2} and \ref{Alg-3}.
	(C) Showing heatmaps of the first co-module identified by Algorithm \ref{Alg-2} (see red rectangular areas) and randomly selected features for comparison.
	(D) Showing heatmaps of the absolute values of PCCs for these correlation matrices on (miRNA, lncRNA), (miRNA, mRNA) and (lncRNA, mRNA), where the three red rectangular areas correspond to the first identified co-module by mwsPLS-scheme1.
	}\label{fig-7}
\end{figure*}

\subsection{Application to Paired DNA Methylation and Gene Expression Data}
Aberrant DNA methylation plays an important role in the development of cancer, which has been regarded as one of the biomarkers of cancer \cite{saghafinia2018pan}.
In recent years, it has become very important to detect the cross-talk mechanism between DNA methylation and gene expression \cite{wang2014network,li2014potential}.
To this end, we obtain the paired DNA methylation and gene expression dataset of Lung Adenocarcinoma (LUAD) in The Cancer Genome Atlas (TCGA) \cite{cancer2014comprehensive,li2014potential}. To remove some noise methylation probes and genes, we extract the top 5000 methylation probes and the top 5000 genes with largest  variance for further analysis.
Finally, the LUAD dataset contains the DNA methylation $\bm{X}\in \mathbb{R}^{350 \times 5000}$ and gene expression data $\bm{Y}\in \mathbb{R}^{350 \times 5000}$.
We apply $\ell_\infty/\ell_0$-wsPLS methods on the LUAD dataset with $k_u = 150$, $k_v = 150$ and $k_w = 350*0.8$ to identify methylation-gene co-module.
For comparison, we also apply $\ell_0$-sPLS with $k_u = 150$ and $k_v = 150$, PMD-sPLS with $c_1=0.1327$, $c_2=0.1375$ on the LUAD data for learning $\bm{u}$ and $\bm{v}$ with the same sparse level.
The setting of $\ell_2/\ell_0$-wsPLS is same as $\ell_\infty/\ell_0$-wsPLS.
The co-module identified by $\ell_\infty/\ell_0$-wsPLS has a strong co-correlation pattern (Figure \ref{fig-6} (A), (B) and (C)).
In addition, we find that $\ell_\infty/\ell_0$-wsPLS obtains the largest PCC $r=0.878$, while
$\ell_2/\ell_0$-wsPLS obtains $r=0.578$, $\ell_0$-sPLS obtains $r=0.792$ and PMD-sPLS obtains $r=0.798$ (Figure \ref{fig-6} (D)).
These results show that wsPLS is more effective to capture the latent patterns of canonical vectors by selecting a specific subset of samples than $\ell_0$-sPLS and PMD-sPLS methods.

\subsection{Application to Paired miRNA-lncRNA-mRNA Expression Data}
To evaluate the mwsPLS method on triple datasets with matched samples, we obtain the miRNA-lncRNA-mRNA transcript expression datasets across 751 Breast invasive carcinoma (BRCA) patients from TCGA database \cite{cancer2012comprehensive}.
We remove some noise features (miRNAs, lncRNAs and mRNAs) by using standard deviation threshold. Finally, we obtain three data matrices including the miRNA expression matrix $\bm{X}_1\in \mathbb{R}^{751 \times 581}$, the lncRNA expression matrix $\bm{X}_2\in \mathbb{R}^{751 \times 3782}$ and the mRNA expression matrix $\bm{X}_2\in \mathbb{R}^{751 \times 6200}$.

We apply mwsPLS-scheme1 (Algorithm \ref{Alg-2}) and mwsPLS-scheme2 (Algorithm \ref{Alg-3}) on the BRCA miRNA-lncRNA-mRNA dataset with $k_1 = 30$ (miRNAs), $k_2 = 100$ (lncRNAs), $k_3 = 200$ (mRNAs) and $k_w=190$ (about $1/4$ of all samples) to identify four subtype-specific miRNA-lncRNA-mRNA co-modules.
We define an average correlation score $S$ to quantify the co-correlation level for a given co-module $M(I,J,K,T)$  where $I$, $J$ and $K$ represent three different feature sets (i.e., miRNA subset, lncRNA subset and mRNA subset) and $T$ represents sample subset:
\begin{equation}\label{equ-30}
	S = \frac{\sum\limits_{i\in I, j\in J}|\text{cor}_{(i,j)}|+\sum\limits_{i\in I, k\in K}|\text{cor}_{(i,k)}|+\sum\limits_{j\in J, k\in K}|\text{cor}_{(j,k)}|}{N},
\end{equation}
where $N=|I|\cdot|J|+|I|\cdot|K|+|J|\cdot|K|$ is the number of all the possible pairs for the three types in the co-module $M(I,J,K,T)$ and $cor_{(i,j)}$ is used to calculate the pair-wise Pearson correlation coefficient (PCC) based on their corresponding expression data on the sample subset $T$. Simply, the $S$ is defined as the average value of the absolute PCCs for all miRNA-lncRNA, miRNA-mRNA, and lncRNA-mRNA pairs.
We apply a permutation test to evaluate the significance level of $S$ score for a given co-module. We firstly construct 1000 random co-modules by randomly sampling the miRNA set, lncRNA set and sample set, and then compute the $S$ scores for these random co-modules.
We find that all the $S$ scores of the identified co-modules by Algorithms \ref{Alg-2} and \ref{Alg-3} are significantly higher than the scores of the random co-modules (Figure \ref{fig-7} (A)). Moreover, the objective function curves show the convergence of Algorithm \ref{Alg-2} and \ref{Alg-3} (Figure \ref{fig-7}).
As an example, we show the expressed pattern of the first co-module identified by mwsPLS-scheme1 (Algorithm \ref{Alg-2}) in Figure \ref{fig-7} (C).
Meanwhile, the correlation matrices (heatmaps) of any two types of molecules in the identified co-module are shown in Figure \ref{fig-7} (D).
In terms of the $S$ scores in Figure \ref{fig-7} (A), mwsPLS-scheme1 (Algorithm \ref{Alg-2}) performs as well or better than mwsPLS-scheme2 (Algorithm \ref{Alg-3}).

To demonstrate the biological function of the gene sets from these identified co-modules by mwsPLS-scheme1. The co-modules identified by mwsPLS-scheme1 are used for further biological analysis. We retrieve the GO biological process data information from Molecular Signatures Database (MSigDB, http://www.gsea-msigdb.org/gsea/msigdb/index.jsp).
Each GO biological process is consisted by a set of functionally gene set.
The hypergeometric test is used for the statistical analysis.
The gene sets of the identified co-modules are significantly enriched a series of important biological processes (Table \ref{tab-3}).

\begin{table}[hbpt]
	\caption{Enrichment analysis of GO biological process terms on the gene sets of co-modules which are identified by Algorithm \ref{Alg-2} on BRCA dataset. We present top five terms of each co-module. ``\#gene'' denotes the number of genes.}\label{tab-3}
	\resizebox{\columnwidth}{!}{
		\begin{threeparttable}
			\centering
			\begin{tabular}{cp{4.4cm}cc}
				\hline
				Module & Biological process term & \#gene & P-value \\
				\hline
				M1 & microtubule based process &  13 & 9.43e-06 \\
				& microtubule cytoskeleton organization &  10 & 8.03e-06 \\
				& axonemal dynein complex assembly &   5 & 5.25e-05 \\
				& axoneme assembly &   6 & 1.11e-04 \\
				& regulation of microtubule based process &   6 & 2.17e-04 \\
				M2 & immune system process & 104 & 4.52e-54 \\
				& regulation of immune system process &  89 & 9.71e-53 \\
				& immune response &  79 & 2.53e-42 \\
				& regulation of immune response &  59 & 1.88e-38 \\
				& positive regulation of immune system process &  59 & 6.98e-37 \\
				M3 & regulation of cellular component movement &  24 & 7.68e-08 \\
				& response to ionizing radiation &   6 & 4.10e-06 \\
				& positive regulation of cell proliferation &  25 & 3.54e-06 \\
				& regulation of epithelial cell proliferation &  13 & 5.12e-06 \\
				& positive regulation of locomotion &  15 & 8.19e-06 \\
				M4 & extracellular structure organization &  19 & 1.04e-09 \\
				& organ morphogenesis &  33 & 8.30e-09 \\
				& skeletal system development &  21 & 7.56e-08 \\
				& regulation of multicellular organismal development &  43 & 5.22e-08 \\
				& tissue development &  43 & 7.11e-08 \\
				\hline
			\end{tabular}
		\end{threeparttable}
	}
\end{table}

For example, the gene set in the co-module 2 (M2) is significantly enriched in the immune related biological processes including \textit{immune system process} ($p = 4.52e-54$), \textit{regulation of immune system process} ($p = 9.71e-53$) and \textit{immune response} ($p = 2.53e-42$), which have been reported to be related to breast cancer \cite{desmedt2008biological,bates2018mechanisms}.
The gene set in the co-module 3 (M3) is significantly enriched in the \textit{regulation of epithelial cell proliferation} ($p = 5.12e-06$), which has been reported to be associated with breast cancer \cite{cichon2015regulation}.
These results show that the mwsPLS methods could effectively identify the biologically related miRNA-lncRNA-mRNA comodules.

\section{Conclusion}
In this paper, we propose an $\ell_\infty/\ell_0$-wsPLS model for joint sample and feature selection, which uses the $\ell_\infty/\ell_0$-norm constraints for sample selection.
It is difficult to solve the $\ell_\infty/\ell_0$-wsPLS model because the $\ell_\infty/\ell_0$-norm constraints are non-convex and non-smooth.
Fortunately, we prove that the objective function of the $\ell_\infty/\ell_0$-wsPLS model satisfies the K\L~property.
Based on this, we propose a block proximal gradient algorithm based on the PALM framework to solve it and show its convergence property.
Furthermore, we extend the $\ell_\infty/\ell_0$-wsPLS model for data fusion on three or more data matrices and propose two mwsPLS models by a sum way and a product way.
Similarly, we propose two block proximal gradient algorithms to solve them and prove that they all have theoretical convergence guarantees.
The numerical and biomedical data experiments demonstrate the efficiency of our methods.
The source code of the proposed wsPLS and mwsPLS methods is available from https://github.com/wenwenmin/wsPLS.

\section*{Acknowledgment}
This work was supported by the National Science Foundation of China (62262069, 61802157, 61902372),
the Open Project Program of Yunnan Key Laboratory of Intelligent Systems and Computing (No. ISC22Z03).

\balance
\small{
\bibliographystyle{IEEEtran}
\bibliography{IEEEabrv,Reference}
}
\begin{IEEEbiography}[{\includegraphics[width=1in,height=1.25in,clip,keepaspectratio]{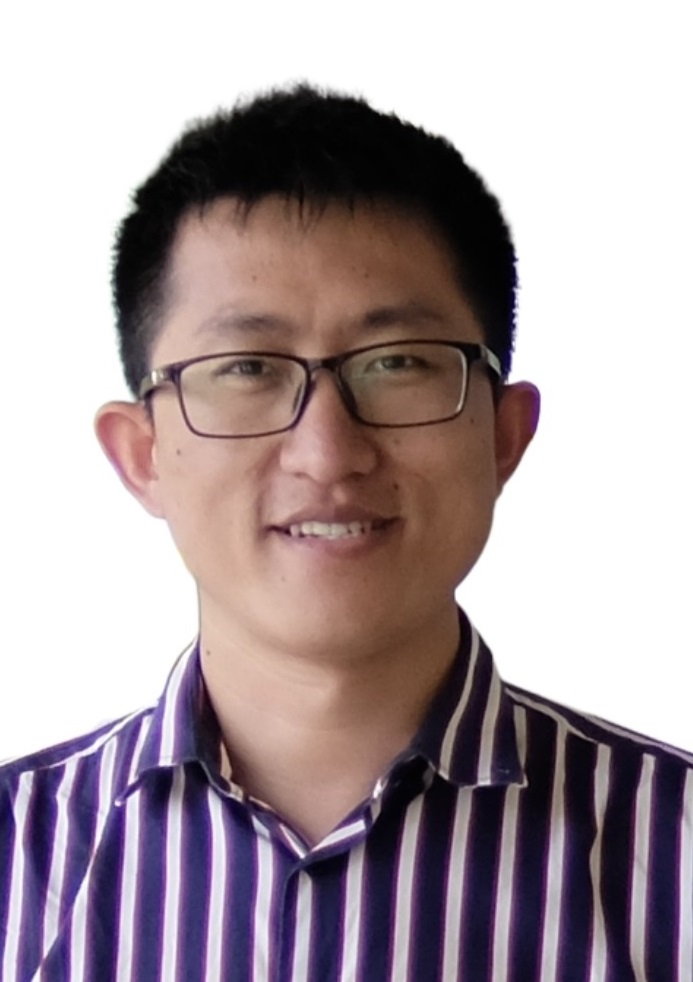}}]{Wenwen Min}
is currently an associate professor in the School of Information Science and Engineering, Yunnan University.
He received the Ph.D. degree in Computer Science from the School of Computer Science, Wuhan University, in December, 2017.
He was a visiting Ph.D student at the Academy of Mathematics and Systems Science, Chinese Academy of Sciences from 2015 to 2017.
He was a Postdoctoral Researcher with the School of Science and Engineering, The Chinese University of Hong Kong, Shenzhen, China, from 2019 to 2021.
His current research interests include data mining, machine learning, sparse optimization and bioinformatics.
He has authored about 20 papers in journals and conferences, such as the
IEEE TKDE,
IEEE TNNLS,
PLoS Comput Biol,
Bioinformatics, and IEEE/ACM TCBB, etc.
\end{IEEEbiography}

\begin{IEEEbiography}[{\includegraphics[width=1in,height=1.25in,clip,keepaspectratio]{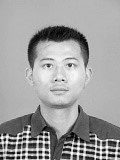}}]{Taosheng Xu} is currently an associate professor in the Hefei Institutes of Physical Science, Chinese Academy of Sciences. He received his BSc (2009) from Hefei University of Technology. He received his PhD degree in Control Science and Engineering in 2016 at University of Science and Technology of China. Dr. Xu is a Postdoctoral Research Fellow at the Arieh WARSHEL Institute for Computational Biology, The Chinese University of Hong Kong, Shenzhen from 2020 to 2022.
His current research interests include data mining, machine learning, bioinformatics and cancer genomics.
\end{IEEEbiography}

\begin{IEEEbiography}[{\includegraphics[width=1in,height=1.25in,clip,keepaspectratio]{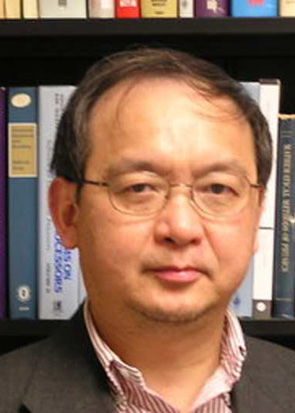}}]{Chris Ding}
received his Ph.D. degree in Columbia University in 1987. He is currently a chair professor at Chinese University of Hong Kong, Shenzhen. Before that, he worked at California Institute of Technology, Jet Propulsion Lab, Lawrence Berkeley Lab, and University of Texas. His areas are machine learning, data mining, computational biology and high performance computing. He proposed L21 norm which is widely used in machine learning. Made significant contributions on principal component analysis, non-negative matrix factorization and feature selection. Designed the minimum redundancy maximum relevance feature selection algorithm that is widely adopted, e.g., by Uber; Two related papers were cited 11,700 times. Published over 200 research papers with over 50,000 citations.
\end{IEEEbiography}
\end{document}